\documentclass{article}

\PassOptionsToPackage{numbers, compress}{natbib}

\usepackage[utf8]{inputenc} % allow utf-8 input
\usepackage[T1]{fontenc}    % use 8-bit T1 fonts
\usepackage{hyperref}       % hyperlinks
\usepackage{url}            % simple URL typesetting
\usepackage{booktabs}       % professional-quality tables
\usepackage{amsfonts}       % blackboard math symbols
\usepackage{nicefrac}       % compact symbols for 1/2, etc.
\usepackage{microtype}      % microtypography
\usepackage{xcolor}         % colors
\usepackage{tcolorbox}
\usepackage[HTML]{xcolor}
\usepackage{booktabs}
\usepackage{tabularx}
\usepackage{array}
\usepackage{ragged2e}
\usepackage{xurl}
\usepackage[bottom]{footmisc}
\usepackage{threeparttable}
\usepackage[numbers,sort&compress]{natbib}
\usepackage{parskip}
\usepackage[margin=1in]{geometry}

\newcolumntype{L}[1]{>{\RaggedRight\arraybackslash}p{#1}}
\newcolumntype{Y}{>{\RaggedRight\arraybackslash}X}

\newcommand{\assetid}[1]{{\scriptsize\path{#1}}}

\newcommand{\assetlink}[2]{\href{#1}{\texttt{#2}}}
\newcommand{\rowrule}{\addlinespace[0.25em]\specialrule{0.15pt}{0pt}{0pt}\addlinespace[0.25em]}

\PassOptionsToPackage{square,numbers,sort&compress}{natbib}

\usepackage[utf8]{inputenc} % allow utf-8 input
\usepackage[T1]{fontenc}    % use 8-bit T1 fonts
\usepackage{hyperref}       % hyperlinks
\usepackage{url}            % simple URL typesetting
\usepackage{booktabs}       % professional-quality tables
\usepackage{amsfonts}       % blackboard math symbols
\usepackage{nicefrac}       % compact symbols for 1/2, etc.
\usepackage{microtype}      % microtypography
\usepackage{xcolor}         % colors
\usepackage{pifont}
\usepackage{enumitem}

\usepackage{graphicx}
\usepackage{algorithm}
\usepackage{dsfont}
\usepackage{amssymb}
\usepackage{algpseudocode}
\usepackage{subcaption}
\usepackage{xcolor}

\usepackage{multirow} 
\graphicspath{{figs/}}
\usepackage[toc,page]{appendix}
\usepackage{tabularx}
\usepackage{relsize}
\usepackage{booktabs} % Required for \toprule, \midrule, and \bottomrule
\usepackage{geometry} % Optional, for page size and margin setup
\usepackage{amsmath}
\usepackage{amsthm}
\usepackage{bbm}
\usepackage{wrapfig,lipsum,booktabs}

\definecolor{boxbg}{HTML}{fff8e8}

\DeclareMathOperator*{\argmax}{arg\,max}

\newcommand{\piref}{\pi_{\text{ref}}}

\newcommand{\logit}{\textnormal{logit}}

\newtheorem{theorem}{Theorem}[section]
\newtheorem{proposition}[theorem]{Proposition}
\newtheorem{lemma}[theorem]{Lemma}
\newtheorem{corollary}[theorem]{Corollary}

\usepackage{tikz}
\usetikzlibrary{positioning, shapes, arrows, calc}

\title{Truncate Bad, Upweight Good: \\ BoN-Style Distillation via Rank-Based Classification}

\author{%
  Yarin Bar\textsuperscript{1} \quad
  Yaniv Romano\textsuperscript{1,2} \\[0.5em]
  \textsuperscript{1}Department of Electrical and Computer Engineering\\[0.2em]
  \textsuperscript{2}Department of Computer Science\\[0.3em]
  Technion--Israel Institute of Technology\\[0.5em]
  \texttt{yarinbar@campus.technion.ac.il}
  \quad
  \texttt{yromano@technion.ac.il}
}

\date{}

\begin{document}

\maketitle

\begin{abstract}
Inference-time selection methods, such as Best-of-N, improve generation by sampling a pool of candidates and selecting the top-ranked completion according to a reward model. Distillation seeks to amortize this procedure into a single policy by replacing raw rewards with in-pool ranks and learning a policy that upweights higher-ranked completions. However, existing rank-based policies typically use smooth full-support reweighting, so low-ranked completions receive less mass but remain in the target support. Although a sharper reweighting reduces lower-tail mass, it also increases reliance on brittle ranking at the top made by a single reward model. We propose \textbf{TUP}: a \textbf{T}runcate-bad, \textbf{U}pweight-good \textbf{P}olicy that removes low-ranked completions from the support and reweights only the retained upper tail with a tunable sharpness. TUP admits a closed-form, prompt-independent normalization and can be trained fully offline via binary cross-entropy, using shifted-truncated win-rates as soft labels and distilled-to-reference log-likelihood ratios as logits. 
Theoretically, under certain assumptions, we show that for any unknown oracle reward, the best monotone rank-reweighting can be matched by a lower-tail truncation rule, providing formal support for removing the lower tail rather than merely downweighting it. 
Empirically, we show that TUP is competitive with strong offline alignment baselines.

\end{abstract}

\section{Introduction}
\label{sec:intro}

Inference-time selection methods such as best-of-$N$ (BoN) have emerged as an effective tool for improving language models' generation quality~\citep{huang_is_2025,beirami_theoretical_2025}. BoN samples several completions from a base policy, scores them using a reward model, and returns the highest-scoring one~\citep{ivison_unpacking_2024, ankner_critique-out-loud_2024, song_good_2025}. The price is computational, since each prompt requires multiple generated completions and reward evaluations~\citep{brown2025large}. This has motivated a growing line of work on BoN-style distillation~\citep{sessa_bond_2025, matrenok_quantile_2025, balashankar_infalign_nodate, gui_bonbon_2024, dong_raft_2023, beirami_theoretical_2025}, where rather than carrying out selection during deployment, one trains a single policy to internalize the behavior of the inference-time selector.

A key insight from recent BoN-style distillation work is that the relevant training signal for a completion is not its raw reward in isolation, but its relative rank within the sampled pool. This relative rank is often referred to as the completion \emph{win-rate}. In turn, existing distillation methods aim to maximize the win-rate while penalizing the KL distance from the base policy~\citep{matrenok_quantile_2025, balashankar_infalign_nodate, gui_bonbon_2024}. This results in policies that apply smooth full-support reweighting, favoring higher-ranked completions while keeping every completion in the target support. Sharper reweighting reduces the probability mass of clearly low-ranked completions, but it also concentrates more mass on the very top of the reward-model ranking. As a result, over-sharpening can move the policy further from the reference model (smaller KL-penalty), increasing the risk of reward hacking~\citep{michaud2020understanding,tien2023causal, jinnai2024regularized, rafailov_2024_scaling_laws}. Since reward models are imperfect proxies for unknown oracle or human preferences~\citep{frick_how_nodate, mac_kim_rethinking_2025, huang_is_2025, khalaf_inference-time_2026, nakano2021webgpt, zheng2023judging, pmlr-v202-gao23h, rafailov_2024_scaling_laws}, recent work shows that coarse preferences tend to persist across different evaluators, while fine-grained distinctions among top-ranked completions are less consistently agreed upon~\citep{lambert_rewardbench_2025, zhang_chasing_2026, landesberg_when_2026}.

Consequently, we argue that the design of a BoN-style distillation policy should separate two choices: (i) how much of the lower-ranked tail should be removed from the support, and (ii) how sharply the retained higher-ranked completions should be upweighted. To this end, we propose \textbf{TUP}, a \textbf{T}runcate-bad, \textbf{U}pweight-good \textbf{P}olicy for BoN-style distillation. TUP decouples these choices with a shifted-truncated win-rate transform. 
Completions whose win-rate falls below the threshold are assigned zero mass, while those above it are softly reweighted by their relative ranking. The threshold parameter controls lower-tail truncation, whereas the upper-tail sharpness parameter controls the upweighting strength within the retained set. By introducing separate parameters for lower-tail truncation and upper-tail sharpness, this design preserves the BoN principle of favoring higher-ranked completions while (i) preventing probability mass from being assigned to clearly undesirable completions, and (ii) avoiding over-sharpening due to possible ranking mistakes at the top.

The resulting target policy remains simple to train from offline data. Each prompt is paired with a pool of completions sampled from the reference policy and scored by a reward model. From these scores, we compute empirical in-pool ranks and convert them into shifted-truncated win-rate labels; by subtracting the truncation parameter, we assign the lower-tail completions an exact zero label. Pairing with the logit transformation, we then train the distilled language model with binary cross-entropy (BCE) loss.

\paragraph{Contributions.}
In this work, we make three key contributions.
\begin{enumerate}
\item \textbf{A truncate-bad, upweight-good policy for BoN-style distillation.}
We propose a rank-based target policy that removes low-ranked completions using a win-rate threshold and softly reweights the retained upper tail.

\item \textbf{Theoretical support for lower-tail truncation and within-tail upweighting.}
Under certain assumptions, we give two theoretical justifications for TUP, using the oracle win-rate as the performance criterion. First, we show that the best hard lower-tail truncation rule can match the best monotone policy that reweights completions by proxy-reward win-rate. Second, after fixing a truncation threshold, we show that smooth upweighting of retained completions can further improve the oracle win-rate when, informally, higher proxy-reward win-rates tend to correspond to higher oracle win-rates within the retained tail.

\item \textbf{Benchmark results and ablations.}
We evaluate TUP on the QRPO benchmark training Llama-8B Tülu 3 SFT on UltraFeedbak and Magpie Air and Mistral-7B-Instruct-v0.2 on Magpie Air, comparing to four leading offline-alignment baseline methods. We evaluate performance using three different reward models, both on held-out data and AlpacaEval~\citep{dubois2025lengthcontrolledalpacaevalsimpleway}. Full implementation of TUP and the experiments are available at \href{https://github.com/yarinbar/truncate-bad-upweight-good}{github.com/yarinbar/truncate-bad-upweight-good}.

\end{enumerate}

\section{Preliminaries and related works}
\label{sec:prelim}

\begin{figure}[tp] % [htbp] suggests placement: here, top, bottom, or page
    \centering
    \includegraphics[width=0.94\textwidth]{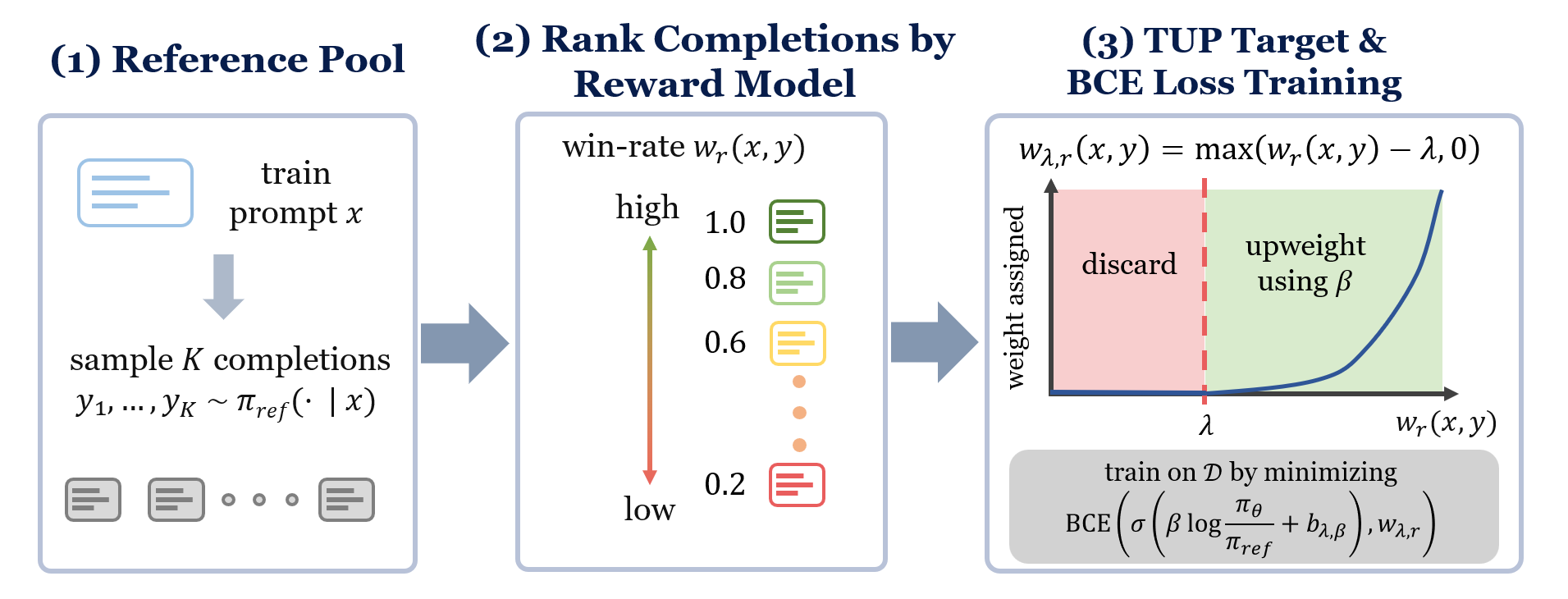} 
    \caption{Illustration of TUP.}
    \label{fig:my_label}
\end{figure}

We denote a language model as a policy $\pi(y \mid x)$ mapping a prompt $x \in \mathcal{X}$ to a distribution over completions $y \in \mathcal{Y}$. Prompts are drawn from a distribution $\mathcal D$ over $\mathcal X$. Let $\pi_{\text{ref}}(y \mid x)$ denote the reference policy, e.g., obtained via supervised fine-tuning, and let $r(x,y) \in \mathbb{R}$ be a reward function.

The standard post-training alignment objective seeks a policy that maximizes expected reward while remaining close to the reference policy:
\begin{equation}
\label{eq:kl_objective}
\pi_{r,\beta}^{\star}(\cdot|x)
=
\arg\max_{\pi}
\mathbb{E}_{y\sim\pi(\cdot|x)}[r(x,y)]
-
\beta D_{\text{KL}}(\pi(\cdot|x)\,\|\,\pi_{\text{ref}}(\cdot|x)),
\end{equation}
where $\beta > 0$ controls the reward--KL tradeoff. This KL-regularized reward-maximization objective is a standard formulation in RLHF for language-model alignment~\citep{christiano_deep_2017, stiennon_rlhf, ziegler2020finetuninglanguagemodelshuman, ouyang_training_2022, bai2022traininghelpfulharmlessassistant}, and is commonly optimized with online reinforcement-learning methods such as PPO~\citep{schulman_proximal_2017} and GRPO~\citep{shao_deepseekmath_2024}.

An alternative paradigm, which we follow in this paper, builds directly on the closed-form solution to~\eqref{eq:kl_objective}, known as the Gibbs policy:
\begin{equation}
\label{eq:gibbs_policy}
\pi_{r,\beta}^{\star}(y|x)
=
\frac{1}{Z(x)} \pi_{\text{ref}}(y|x)
\exp\left(\frac{r(x,y)}{\beta}\right), \ \ \ Z(x)
=
\mathbb{E}_{Y\sim\pi_{\text{ref}}(\cdot|x)}
\left[\exp\left(\frac{r(x,Y)}{\beta}\right)\right],
\end{equation}
where $Z(x)$ is the partition function. Rearranging~\eqref{eq:gibbs_policy} yields
\begin{equation}
\label{eq:reward_regression}
r(x,y)
=
\beta \log Z(x)
+
\beta \log \frac{\pi_{r,\beta}^{\star}(y|x)}{\pi_{\text{ref}}(y|x)}.
\end{equation}
This relation suggests a direct fitting strategy. Replacing the unknown optimal policy $\pi_{r,\beta}^{\star}$ with a parameterized model $\pi_\theta$ would allow one to train the policy by regressing the right-hand side of~\eqref{eq:reward_regression} to predict the reward. The difficulty is that the partition function $Z(x)$ is prompt-dependent and impractical to compute. Preference-optimization methods avoid the need to compute  $Z(x)$ in different but related ways. DPO~\citep{rafailov_direct_2023}, IPO~\citep{garg_ipo_2025}, and SimPO~\citep{meng_simpo_2024} learn from pairwise or binary preference signals, while REBEL~\citep{gao_rebel_2024} regresses relative reward differences between completions.

Although standard post-training alignment typically treats the raw reward $r(x,y)$ as the quantity to optimize, this is not the most natural choice for inference-time selection algorithms such as BoN. Recent inference-aware and rank-based methods, including InfAlign~\citep{balashankar_infalign_nodate} and QRPO~\citep{matrenok_quantile_2025}, motivate replacing raw reward values with relative-rank quantities. A natural choice is the population \emph{win-rate}
\begin{equation}
\label{eq:population_winrate}
w_{r}(x,y)
:=
\mathbb{P}_{Y^{\prime}\sim\pi_{\text{ref}}(\cdot|x)}
\bigl(r(x,Y^{\prime}) \le r(x,y)\bigr),
\end{equation}
which measures how often $y$ beats an independent reference sample. In practice, one can estimate the win-rate from a finite pool of completions.

This win-rate parametrization is useful not only because it matches the relative nature of BoN-style selection, but also because it removes the prompt-dependent partition function in~\eqref{eq:reward_regression}. When $Y \sim \pi_{\mathrm{ref}}(\cdot\mid x)$ and the conditional reward distribution is continuous, the probability integral transform implies that $w_r(x,Y)\sim U[0,1]$. Therefore, for any nonnegative reweighting function $g:[0,1]\to[0,\infty)$ with finite integral, the rank-based policy
\begin{equation}
    \label{eq:pi_g}
    \pi_g(y\mid x)
    =
    \frac{1}{Z_g} g(w_r(x,y)) \pi_{\mathrm{ref}}(y\mid x),
    \qquad
    Z_g
    =
    \int_0^1 g(u)\,du,
\end{equation}
has a normalizer $Z_g$ independent of the prompt $x$. Many rank-based policies follow this construction. For example, classic Best-of-$N$ induces $g_{\mathrm{BoN},N}(w)\propto w^{N-1}$~\citep{gui_bonbon_2024, beirami_theoretical_2025}, and the standard QRPO formulation induces $g_{\mathrm{QRPO},\beta}(w)\propto e^{w/\beta}$~\citep{matrenok_quantile_2025}. Both assign positive weight over the entire support, leaving open how the win-rate should be transformed into a target-policy reweighting.

This full-support property clarifies how our target differs from related alignment methods. Iterative BoN-distillation methods such as BOND~\citep{sessa_bond_2025} and Faster-WIND~\citep{yang_faster_2025} also aim to approximate or distill Best-of-$N$-type behavior, but they inherit the smooth reweighting view rather than defining a hard lower-tail removal rule. Other objectives, including RAFT~\citep{dong_raft_2023}, SLiC-HF~\citep{zhao_slic-hf_2023}, and RRHF~\citep{yuan_rrhf_2023}, use reward filtering or relative preference rankings to construct training signals, but they do not specify a prompt-independent rank-transform density that first truncates the lower tail and then softly reweights the retained upper tail. Our work makes this truncation-and-reweighting operation explicit, yielding a finite prompt-independent normalization and a target that can be fit as a soft-classification problem using the transformed win-rate as labels.

\section{Method}
\label{sec:method}
% !TeX root = ../neurips_2026.tex

\begin{figure}[tp]

     \centering
     % First Subfigure
     \begin{subfigure}[t]{0.41\textwidth}
         \centering
         \includegraphics[width=\textwidth]{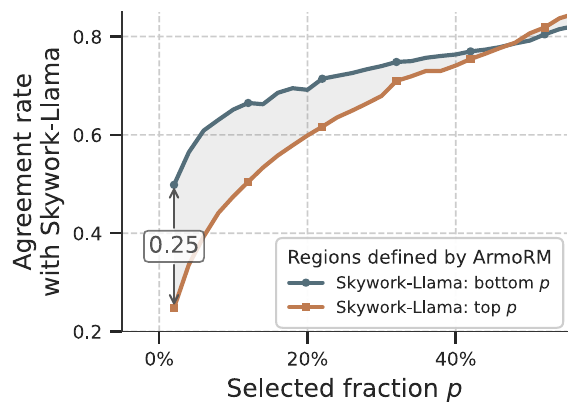}
     \end{subfigure}
     \hfill 
     % Second Subfigure
     \begin{subfigure}[t]{0.50\textwidth}
         \centering
         \includegraphics[width=\textwidth]{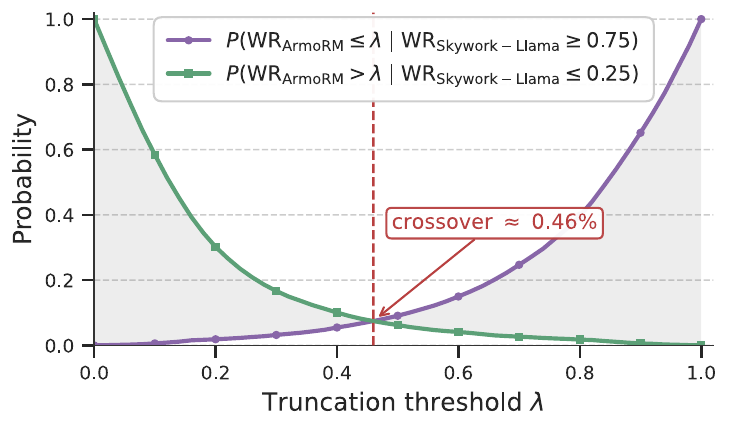}
     \end{subfigure}
     \caption{Reward model agreement \textbf{(left)} and the cost--benefit tradeoff of $\lambda$-truncation \textbf{(right)} on UltraFeedback reference completions. We compare ArmoRM, used as the proxy reward model, against the independent Skywork-v2-Llama reward model. \textbf{Left:} Fraction of completions placed by ArmoRM in the top or bottom $p$ region that Skywork also places in the same region, as a function of $p$. \textbf{Right:} Probability that $\lambda$-truncation discards a Skywork-top-$25\%$ completion or retains a Skywork-bottom-$25\%$ completion. The crossover marks where these error probabilities are equal.}
    \label{fig:top_bot_disagreement}
\end{figure}

\subsection{The proposed truncate-bad, upweight-good target policy}
\label{sec:our-policy}

Gibbs-like policies \eqref{eq:pi_g} that reweight $\pi_{\mathrm{ref}}$ by a smooth monotone function of the proxy win-rate, such as QRPO~\citep{matrenok_quantile_2025}, face the sharpness tradeoff we discussed above. 

Figure~\ref{fig:top_bot_disagreement} (left) provides an empirical illustration motivating our choice to decouple lower-tail truncation from upper-tail sharpness. When the same reference completions are ranked by ArmoRM~\citep{wang_interpretable_2024} and by an independent Skywork-v2-Llama reward~\citep{liu_skywork-reward-v2_2025}, the models tend to agree more on the bottom of the pool than on the top. Thus, removing the lower tail targets a comparatively stable region across reward models, whereas sharpening toward the top places disproportionate trust in fine-grained top-rank distinctions with weaker cross-model agreement. Additional such comparisons are available in Appendix~\ref{appx:additional_exps}.

We instantiate this principle by introducing a truncation level \(\lambda\in(0,1)\), 
which determines whether a completion remains in the target support, and a sharpness 
parameter \(\beta>0\), which controls how strongly the retained completions are 
reweighted. Given \(\lambda\), we define the \emph{shifted-truncated win-rate}
\begin{equation}
    w_{\lambda, r}(x,y) \;:=\; \max\!\bigl(w_r(x,y) - \lambda,\, 0\bigr),
\end{equation}
and use it to define the transformed reward
\begin{equation}
    R_\lambda(x,y) \;:=\; \mathrm{logit}\!\bigl(w_{\lambda, r}(x,y)\bigr) 
    \;=\; \log\!\frac{w_{\lambda, r}(x,y)}{1 - w_{\lambda, r}(x,y)}.
\end{equation}
The truncation level \(\lambda\) acts as a quality threshold. Completions with 
\(w_r(x,y)\le \lambda\) are mapped to \(R_\lambda(x,y)=-\infty\), while completions 
above the threshold are retained and scored by the log-odds of their shifted win-rate. 
Thus, \(\lambda\) imposes a hard quality floor, whereas the transformed reward preserves 
a smooth ordering among the retained completions. As shown in 
Figure~\ref{fig:top_bot_disagreement} (right), increasing \(\lambda\) lowers the 
probability of retaining completions that the independent reward model rank poorly, but 
also raises the probability of discarding completions this model rank highly.

We now derive the KL-regularized target policy induced by the transformed reward $R_\lambda(x,y)$. The key point is 
that truncation preserves the tractability of win-rate-based objectives. When the reward 
distribution is continuous under \(Y\sim\pi_{\mathrm{ref}}(\cdot\mid x)\), the population 
win-rate \(w_r(x,Y)\) is uniform on \([0,1]\). After shifting and truncating this variable, 
the Gibbs normalizer remains independent of the prompt and can be computed in closed form.

\begin{proposition}[Closed-form target under the truncated reward]
\label{prop:optimal_policy_lambda}
Assume that, for the fixed prompt \(x\), the distribution of \(r(x,Y)\) under 
\(Y\sim\pi_{\mathrm{ref}}(\cdot\mid x)\) is continuous. Then, for every truncation level 
\(\lambda \in (0,1)\) and regularization parameter \(\beta > 0\), the KL-regularized 
objective in~\eqref{eq:kl_objective} with reward \(R_\lambda\) admits the unique Gibbs 
solution
\begin{equation}
    \label{eq:optimal_policy_lambda}
    \pi_{\lambda,\beta}^\star(y \mid x)
    \;=\;
    \frac{1}{Z_{\lambda,\beta}}\,
    \pi_{\mathrm{ref}}(y \mid x)
    \left(\frac{w_{\lambda, r}(x,y)}{1 - w_{\lambda, r}(x,y)}\right)^{1/\beta},
\end{equation}
where the normalization constant is
\begin{equation}
    \label{eq:Z_lambda_beta}
    Z_{\lambda,\beta}
    \;=\;
    \int_0^{1-\lambda}
    t^{1/\beta}(1-t)^{-1/\beta}\,dt
    \;=\;
    \mathrm{Beta}_{1-\lambda}\!\bigl(1+\tfrac{1}{\beta},\, 1-\tfrac{1}{\beta}\bigr).
\end{equation}
Above, $\mathrm{Beta}_{u}(a,b)$ denotes the incomplete Beta function.
\end{proposition}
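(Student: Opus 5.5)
The plan is to reduce the statement to the standard Gibbs variational principle and then compute the normalizer by a change of variables through the probability integral transform.

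\emph{Step 1 (variational identity).} For any policy $\pi(\cdot\mid x)$ absolutely continuous with respect to $\pi_{\mathrm{ref}}(\cdot\mid x)$ and with $\mathbb{E}_\pi[R_\lambda]>-\infty$, I would write
\[
\mathbb{E}_{\pi}[R_\lambda]-\beta D_{\mathrm{KL}}(\pi\,\|\,\pi_{\mathrm{ref}})
= \beta\log Z_{\lambda,\beta}-\beta D_{\mathrm{KL}}(\pi\,\|\,\pi^\star_{\lambda,\beta}),
\]
where $\pi^\star_{\lambda,\beta}\propto \pi_{\mathrm{ref}}\exp(R_\lambda/\beta)$. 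Since $\exp(R_\lambda/\beta)=\bigl(w_{\lambda,r}/(1-w_{\lambda,r})\bigr)^{1/\beta}$, this is exactly the form in \eqref{eq:optimal_policy_lambda}. The identity follows by expanding $\log(\pi/\pi^\star)=\log(\pi/\pi_{\mathrm{ref}})-R_\lambda/\beta+\log Z_{\lambda,\beta}$, as in the derivation of \eqref{eq:gibbs_policy}.

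Nonnegativity of KL, with equality iff $\pi=\pi^\star$, then gives existence and uniqueness of the maximizer. This is valid provided $0<Z_{\lambda,\beta}<\infty$. The value $R_\lambda=-\infty$ needs care: any $\pi$ putting positive mass on $\{w_r\le\lambda\}$ has objective $-\infty$, so we restrict attention to policies supported on $\{w_r>\lambda\}$. On that set $R_\lambda$ is finite except possibly where $w_{\lambda,r}$ is near $1-\lambda<1$. There the logit stays bounded because $w_{\lambda,r}\le 1-\lambda$, so $1-w_{\lambda,r}\ge\lambda>0$.

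\emph{Step 2 (normalizer).} We have $Z_{\lambda,\beta}=\mathbb{E}_{Y\sim\pi_{\mathrm{ref}}}\bigl[h(w_r(x,Y))\bigr]$ with $h(u)=\bigl(\max(u-\lambda,0)\bigr)^{1/\beta}\bigl(1-\max(u-\lambda,0)\bigr)^{-1/\beta}$, using the convention $h=0$ on $u\le\lambda$ (i.e.\ $e^{-\infty}=0$). By continuity of the reward distribution, the probability integral transform gives $w_r(x,Y)\sim U[0,1]$, as noted before \eqref{eq:pi_g}. Hence
\[
Z_{\lambda,\beta}=\int_\lambda^1 h(u)\,du=\int_0^{1-\lambda}t^{1/\beta}(1-t)^{-1/\beta}\,dt
\]
via the substitution $t=u-\lambda$. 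This matches the definition of the incomplete Beta function $\mathrm{Beta}_{1-\lambda}(1+1/\beta,\,1-1/\beta)$. Independence from $x$ is immediate.

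\emph{Main obstacle: finiteness and positivity of $Z_{\lambda,\beta}$.} Positivity holds because the integrand is positive on $(0,1-\lambda)$. Finiteness is where $\lambda>0$ is essential: on $[0,1-\lambda]$ we have $(1-t)^{-1/\beta}\le\lambda^{-1/\beta}$, so the integral is bounded by $\lambda^{-1/\beta}(1-\lambda)$ for every $\beta>0$. This holds even when $1-1/\beta\le 0$, a case in which the complete Beta function would diverge; so the incomplete-Beta expression should be read as the integral itself.

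I would also verify that the objective value at $\pi^\star$ is finite, namely $\beta\log Z_{\lambda,\beta}$. Finally, I would remark that Step 1 requires $\mathbb{E}_{\pi^\star}[|R_\lambda|]<\infty$. This holds because near $t=0$ we have $t^{1/\beta}|\log t|$ integrable, and near $t=1-\lambda$ the logit is bounded.
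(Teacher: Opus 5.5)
Your proposal is correct and follows the paper's proof: substitute $R_\lambda$ into the Gibbs form \eqref{eq:gibbs_policy}, then use the probability integral transform and the shift $t=u-\lambda$ to obtain the incomplete-Beta normalizer. The paper cites \eqref{eq:gibbs_policy} without re-deriving the variational identity, and it never checks that $0<Z_{\lambda,\beta}<\infty$ or deals with the region where $R_\lambda=-\infty$; your support argument and your bound $Z_{\lambda,\beta}\le(1-\lambda)\lambda^{-1/\beta}$ (valid even when $1-1/\beta\le 0$) supply exactly what is needed to make the claimed existence and uniqueness rigorous.
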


The proposition follows by substituting \(R_\lambda\) into the Gibbs solution in~\eqref{eq:gibbs_policy}; Appendix~\ref{app:normalizer_incomplete_beta} gives the population normalizer calculation. The result separates support selection from within-support reweighting, where \(\lambda\) controls which completions remain in the support and the sharpening parameter \(\beta\) controls how strongly the target policy departs from \(\pi_{\mathrm{ref}}\) within the retained set.

\begin{figure}[tp]
     \centering
     % First Subfigure
     \begin{subfigure}[t]{0.70\textwidth}
         \centering
         \includegraphics[width=\textwidth]{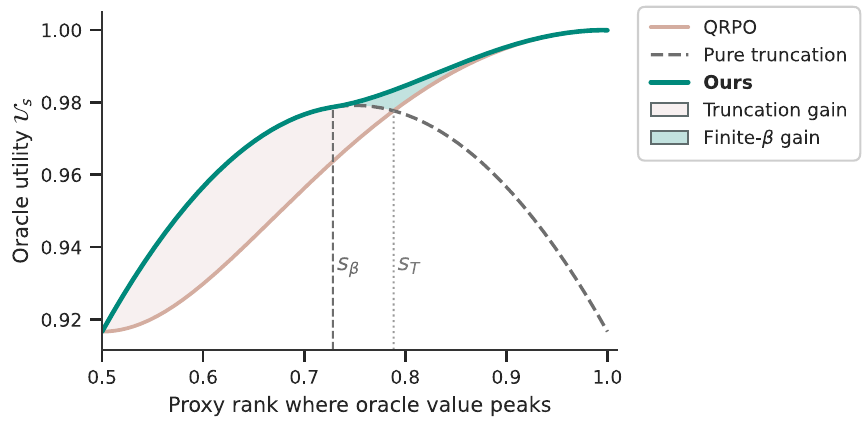}
     \end{subfigure}
    \caption{Fixed-threshold illustration for the quadratic un-normalized oracle-proxy profile $m_s(w)=1-(w-s)^2$ at $\lambda=1/2$. The x axis is the proxy-rank location $s$ where oracle utility is maximized, and the y axis shows the unnormalized oracle utility $\mathcal U_s$. The curves compare the best QRPO policy, pure truncation, and the best TUP reweighting at the fixed threshold $\sup_{\beta\in(0,\infty]}\mathcal U_s(1/2,\beta)$. The vertical dashed line marks $s_\beta$, above which finite $\beta$ improves over pure truncation, and the dotted line marks $s_T$, where pure truncation no longer outperforms the best QRPO curve.}

    \label{fig:theory}
\end{figure}

\subsection{Theoretical merits of truncate-bad, upweight-good policies}
\label{sec:oracle_win-rate_theory}

\begin{algorithm}[t]
\caption{TUP offline BoN distillation with BCE}
\label{alg:bce_alignment}
\begin{algorithmic}[1] % [1] enables automatic line numbering
    \Require Offline prompts $\{x_i\sim\mathcal{D}\}_{i=1}^n$, reference policy $\pi_{\mathrm{ref}}$, reward model $r$, truncation $\lambda$, sharpness $\beta$
    \Require For each $x_i$, a pool $\{y_{i,j}\sim \pi_{\mathrm{ref}}(\cdot \mid x_i)\}_{j=1}^K$ with empirical in-pool win-rates
    \[
        \hat{w}_r(x_i,y_{i,j})
        =
        \frac{1 + \sum_{\ell \ne j}\mathds{1}[r(x_i,y_{i,j}) \ge r(x_i,y_{i,\ell})]}{K}
        \in \left\{\frac{1}{K},\dots,1\right\}
    \]
    
    \State Precompute truncated win-rates and bias term:
    \[
        \hat{w}_{\lambda,r}(x_i,y_{i,j}) = \max\!\bigl(\hat{w}_r(x_i,y_{i,j}) - \lambda,\, 0\bigr),
        \qquad
        b_{\lambda,\beta} = \beta \log Z_{\lambda,\beta}
    \]
    
    \For{each training step}
        \State Sample a minibatch of indexed pool elements $(x_i,y_{i,j})$ from the reference-sampled pool
        \State Compute the logit and predicted probability:
        \begin{equation*}
        s_\theta(x_i,y_{i,j}) = \beta \log \frac{\pi_\theta(y_{i,j}\mid x_i)}{\pi_{\mathrm{ref}}(y_{i,j}\mid x_i)} + b_{\lambda,\beta},
        \qquad
        p_\theta(x_i,y_{i,j}) = \sigma\!\bigl(s_\theta(x_i,y_{i,j})\bigr)
        \end{equation*}
        \State Take a gradient step to minimize the BCE loss:
        \begin{equation*}
        \mathcal{L}_{\mathrm{BCE}}(\theta)
        = - \hat{w}_{\lambda,r}(x_i,y_{i,j})\log p_\theta(x_i,y_{i,j})
        - \bigl(1-\hat{w}_{\lambda,r}(x_i,y_{i,j})\bigr)\log\!\bigl(1-p_\theta(x_i,y_{i,j})\bigr)
        \end{equation*}

    \EndFor
\end{algorithmic}
\end{algorithm}

We now provide a theoretical analysis that sheds light on the roles of truncation and reweighting in our target policy. To set the stage, fix a prompt \(x\), and let \(Y\sim\pi_{\mathrm{ref}}(\cdot\mid x)\). Recall that $w_r(x,Y)$ is the win-rate computed using the proxy reward \(r(x,\cdot)\). Define also the
\emph{oracle win-rate} 
\[
w_u(x,Y)
:=
\mathbb P_{Y'\sim\pi_{\mathrm{ref}}(\cdot\mid x)}
\!\bigl(u(x,Y')\le u(x,Y)\bigr),
\]
where \(u(x,\cdot)\) is an unknown oracle reward function.

The following analysis builds on the general formulation of rank-based policies $\pi_g$ from \eqref{eq:pi_g}. Notably, our proposed policy also falls under this construction, with the function \(g_{\lambda,\beta}(w)\propto \bigl(\frac{w-\lambda}{1-w+\lambda}\bigr)^{1/\beta}\mathds{1}\{w>\lambda\}\). With this in place, we can evaluate a candidate policy $\pi_g$ using the unknown oracle reward, through the expected oracle win-rate of completions $Y$ sampled from $\pi_g$:
\begin{equation*}
    \mathcal{U}_x(g) := \mathbb{E}_{Y \sim \pi_g(\cdot \mid x)} [w_u(x,Y)].
\end{equation*}

Our first result shows that, among monotone reweightings of proxy rank, it is enough to consider hard truncation rules. Let \(\mathcal F_\uparrow\) denote the set of nondecreasing densities on \([0,1]\), and for each \(\lambda\in[0,1)\), let
$
\nu_\lambda(w):=\frac{1}{1-\lambda}\mathds{1}\{w\in[\lambda,1]\}
$
denote the uniform density on the upper tail, which is the limit
\(g_{\lambda,\infty}(w)\propto \mathds{1}\{w>\lambda\}\). With this notation, we state the following
result.
\begin{theorem}[Informal]
\label{thm:informal_truncation}
Fix a prompt $x$, and assume that the proxy reward $r(x,Y)$ is continuous under $Y\sim\pi_{\mathrm{ref}}(\cdot\mid x)$, so that the proxy win-rate $w_r(x,Y)$ is uniform on $[0,1]$. Let $\mathcal F_\uparrow$ be the class of nonnegative nondecreasing densities on $[0,1]$. Then
\[
\sup_{f\in\mathcal F_\uparrow}\mathcal U_x(f)
=
\sup_{\lambda\in[0,1)}\mathcal U_x(\nu_\lambda).
\]
\end{theorem}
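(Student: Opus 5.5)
Writing $W=w_r(x,Y)$ for $Y\sim\piref(\cdot\mid x)$, we have $W\sim U[0,1]$ by assumption. This lets us reduce everything to a one-dimensional problem on $[0,1]$. Define the oracle--proxy profile
\[
m(w):=\mathbb{E}_{Y\sim\piref(\cdot\mid x)}\bigl[w_u(x,Y)\,\big|\,w_r(x,Y)=w\bigr],
\]
which is bounded in $[0,1]$ and measurable. For any density $f$ on $[0,1]$ (so $Z_f=1$), the definition \eqref{eq:pi_g} of $\pi_f$ and the tower property give
\[
\mathcal U_x(f)=\mathbb{E}_{\piref}\bigl[f(W)\,w_u(x,Y)\bigr]=\int_0^1 f(w)\,m(w)\,dw .
\]
So $\mathcal U_x$ is a \emph{linear} functional of $f$, and the claim becomes a statement about maximizing a linear functional over the convex set $\mathcal F_\uparrow$.

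The inequality $\ge$ is immediate, since each $\nu_\lambda$ with $\lambda\in[0,1)$ is a nonnegative nondecreasing density and hence lies in $\mathcal F_\uparrow$.

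For $\le$, I would show that every $f\in\mathcal F_\uparrow$ is a mixture of the $\nu_\lambda$ (a layer-cake, or Choquet-type, decomposition).
\begin{itemize}
\item Take $f$ nondecreasing and right-continuous; modifying $f$ on a null set does not change $\mathcal U_x$.
\item Write $f(w)=f(0)+\int_{(0,w]}df(\lambda)$, i.e.\ $f(w)=\int_{[0,1)}\mathds 1\{w\ge\lambda\}\,d\mu(\lambda)$. Here $\mu$ is the nonnegative measure with an atom $f(0)$ at $0$ plus the Stieltjes measure $df$ on $(0,1)$.
\item Since $\mathds 1\{w\ge\lambda\}=(1-\lambda)\nu_\lambda(w)$, this gives
\[
f=\int_{[0,1)}\nu_\lambda\,(1-\lambda)\,d\mu(\lambda).
\]
\item Integrating both sides over $[0,1]$ yields $\int(1-\lambda)\,d\mu(\lambda)=1$, so $\rho(d\lambda):=(1-\lambda)\,d\mu(\lambda)$ is a probability measure on $[0,1)$.
\item By Tonelli (everything is nonnegative and bounded by $f$, which is integrable),
\[
\mathcal U_x(f)=\int_{[0,1)}\mathcal U_x(\nu_\lambda)\,\rho(d\lambda)\le\sup_{\lambda\in[0,1)}\mathcal U_x(\nu_\lambda).
\]
\end{itemize}
Taking the supremum over $f\in\mathcal F_\uparrow$ finishes the proof.

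The main obstacle is the decomposition step when $f$ is unbounded near $w=1$. A nondecreasing density can blow up at $1$; integrability only bounds its behavior near $1$. In that case the Stieltjes measure $df$ has infinite total mass near $1$, but the weight $(1-\lambda)$ still makes $\rho$ finite, and showing this is the point that needs care. I would handle it by truncation: apply the argument to $f\wedge n$ restricted to $[0,1-1/n]$ and renormalized, then pass to the limit using monotone convergence for $\int f m$. Alternatively, one can verify $\int(1-\lambda)\,df(\lambda)=\int f-f(0)$ directly via integration by parts together with $(1-w)f(w)\to0$ as $w\to1$. That limit holds because $f$ is nondecreasing and integrable: $(1-w)f(w)\le\int_w^1 f\to0$.

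A secondary remark is that the supremum over $\lambda$ need not be attained, since $\lambda\mapsto\mathcal U_x(\nu_\lambda)$ is continuous on $[0,1)$ and the supremum may only be approached as $\lambda\to1$. This is why the theorem is stated with $\sup$ rather than $\max$, and the argument above never needs attainment.
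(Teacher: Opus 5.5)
Your proposal is correct and is essentially the paper's proof. The paper also reduces $\mathcal U_x(f)$ to $\int_0^1 m_x f$, writes $f$ as a mixture of the $\nu_\lambda$ with exactly your mixing measure $f(0)\delta_0+(1-\lambda)\,df(\lambda)$, and concludes by Fubini, with the reverse inequality immediate. Two small remarks: your worry about $f$ blowing up at $1$ does not arise under the paper's definition, since a nondecreasing $f:[0,1]\to[0,\infty)$ is bounded by $f(1)$; and in any case your own Tonelli computation $\int_0^1 f=\int(1-\lambda)\,d\mu(\lambda)$ already fixes the normalization, so the truncation detour is unnecessary.
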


The formal statement and proof are given in
Appendix~\ref{app:truncation_monotone_reweighting}. This result implies that the
best hard-threshold rule, obtained by optimizing $\lambda$ for each prompt $x$, can match the
best rank-based rule in those families, such as QRPO and BoNBoN. In practice, tuning $\lambda$ for each prompt $x$ is infeasible, and thus we treat it as a global quality floor. To empirically quantify the potential gain of prompt-adaptive truncation over a fixed global threshold, we consider a simplified, training-free experiment that treats a strong reward model as an ``oracle'' evaluator. We show that the difference is relatively insignificant. For full details, refer to Section~\ref{appx:global_vs_per_prompt} of the Appendix.

Once the threshold is fixed, we show that tail sharpening can further improve the policy if the proxy win rates within the retained tail is informative about oracle preferences. To formalize this relationship, define the oracle--proxy profile as
$m_x(w):=\mathbb E_{Y\sim\pi_{\mathrm{ref}}(\cdot\mid x)}
[w_u(x,Y)\mid w_r(x,Y)=w]$,
the average oracle win rate among completions with proxy win rate $w$. In oracle rank space, the following result states that a finite $\beta$ can improve over pure truncation when this profile is positively aligned with the within-tail log-odds tilt. The formal statement and proof are given in Appendix~\ref{app:general_within_tail_upweighting}.

\begin{proposition}[Informal]
\label{prop:informal_beta}
Fix a threshold $\lambda_0\in(0,1)$. If, within the retained tail $[\lambda_0,1]$, the oracle profile $m_x$ has positive covariance with the within-tail log-odds tilt $\log((w-\lambda_0)/(1-w+\lambda_0))$, then there exists a finite $\beta_0 \in (0,\infty)$ such that $\pi^\star_{\lambda_0,\beta_0}$ achieves strictly larger expected oracle win-rate than pure truncation policy $\pi^\star_{\lambda_0,\infty}$.
\end{proposition}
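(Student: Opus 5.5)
Write $\lambda:=\lambda_0$ and $\ell(w):=\log\frac{w-\lambda}{1-w+\lambda}$ on $(\lambda,1]$. The plan is a first-order perturbation argument in $\gamma:=1/\beta$ around $\gamma=0$, where $\gamma=0$ corresponds to pure truncation $\pi^\star_{\lambda,\infty}$. Because $w_r(x,Y)$ is uniform under $\pi_{\mathrm{ref}}$, and by the tower property through $m_x$, the oracle utility of the rank-based policy $\pi_g$ in~\eqref{eq:pi_g} becomes a one-dimensional integral:
\[
\mathcal U_x(g)=\frac{\int_0^1 g(w)\,m_x(w)\,dw}{\int_0^1 g(w)\,dw}.
\]
For $g_{\lambda,\beta}$ this reads
\[
\Phi(\gamma):=\mathcal U_x(g_{\lambda,1/\gamma})=\frac{\int_\lambda^1 e^{\gamma\ell(w)}m_x(w)\,dw}{\int_\lambda^1 e^{\gamma\ell(w)}\,dw}.
\]
This is the mean of $m_x$ under the exponentially tilted law $q_\gamma(w)\propto e^{\gamma\ell(w)}\mathds 1\{w>\lambda\}$. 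At $\gamma=0$ the law $q_0$ is $\nu_\lambda$, so $\Phi(0)=\mathcal U_x(\nu_\lambda)$, the value of the pure truncation policy.

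Differentiating under the integral gives the standard exponential-family identity
\[
\Phi'(\gamma)=\mathrm{Cov}_{q_\gamma}\bigl(m_x(W),\ell(W)\bigr).
\]
At $\gamma=0$ this equals $\mathrm{Cov}_{\nu_\lambda}(m_x,\ell)$, which is exactly the within-tail covariance assumed positive in the statement. Hence $\Phi'(0)>0$. Since $\Phi$ is continuous at $0$, there is some small $\gamma_0>0$ with $\Phi(\gamma_0)>\Phi(0)$. Setting $\beta_0=1/\gamma_0$ then gives the claim.

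The main obstacle is justifying differentiability of $\Phi$ at $\gamma=0^+$, i.e.\ differentiating under the integral. The function $\ell$ is unbounded at both ends of the tail:
\begin{itemize}
\item at $w\downarrow\lambda$, $\ell(w)\to-\infty$ logarithmically;
\item at $w=1$, $\ell$ is finite only when $\lambda>0$, giving $\log(1-\lambda)-\log\lambda$.
\end{itemize}
I will handle this with dominated convergence:
\begin{itemize}
\item For $\gamma\in[0,\gamma_1]$ with $\gamma_1<1$, we have $e^{\gamma\ell}|\ell|\le C\bigl(1+|\log(w-\lambda)|\bigr)$, which is integrable on $[\lambda,1]$.
\item The bound $0\le m_x\le 1$ holds because $m_x$ is a conditional expectation of a win-rate.
\end{itemize}
These give integrable dominating functions for the numerator, the denominator, and their $\gamma$-derivatives, so both are $C^1$ on $[0,\gamma_1)$ and the denominator stays bounded away from zero. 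The quotient rule then yields the covariance formula.

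An alternative that avoids derivatives is to show directly that $(\Phi(\gamma)-\Phi(0))/\gamma\to\mathrm{Cov}_{\nu_\lambda}(m_x,\ell)$, using $(e^{\gamma\ell}-1)/\gamma\to\ell$ with the same dominating bound. I would use this if measurability of $m_x$ makes the derivative route awkward.

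Finally, I would check that $\gamma$ small corresponds to $\beta_0$ finite and positive, as the statement requires. I would also note that $Z_{\lambda,\beta}$ from Proposition~\ref{prop:optimal_policy_lambda} is finite for $\beta>1$, which is consistent with restricting to $\gamma<1$.
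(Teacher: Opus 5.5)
Your proposal is correct and follows essentially the same route as the paper's proof of Proposition~\ref{prop:local_upweighting}: reduce to rank space, write $f_{\lambda,\beta}$ as an exponential tilt of $\nu_\lambda$ in $1/\beta$, and identify the derivative at $\beta=\infty$ with $\operatorname{Cov}_{\mathrm{Unif}[\lambda,1]}(m_x,h_\lambda)$. Your dominated-convergence bound is a more explicit version of the paper's one-line integrability remark; one small correction is that, since $\lambda>0$, the tilt is bounded above by $\log((1-\lambda)/\lambda)$, so $Z_{\lambda,\beta}$ is finite for every $\beta>0$ and your restriction to $\gamma<1$ is harmless but unnecessary.
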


As with the truncation threshold $\lambda$, the theoretically optimal tail-sharpening parameter $\beta$ is prompt-specific. In practice, however, we use a single global value of $\beta$ across all prompts. In this sense, Theorem~\ref{thm:informal_truncation} and Proposition~\ref{prop:informal_beta} provide idealized theoretical support for separating truncation from sharpness, but they do not directly explain how a training model can benefit from it.

To demonstrate the effects of threshold $\lambda$ and the reweighting effect of a finite $\beta$, Appendix~\ref{app:quadratic_worked_example} and Figure~\ref{fig:theory} use a stylized, unnormalized quadratic oracle--proxy example; the appendix shows that normalization does not affect the relevant comparisons. The improvement of the pure-truncation policy $\pi^\star_{\lambda,\infty}$ over QRPO reflects the gain from using a threshold to remove low-ranked completions. The improvement of the best finite-$\beta$ TUP policy over $\pi^\star_{\lambda,\infty}$ reflects the additional gain from reweighting within the retained tail once the threshold is fixed.

\subsection{Training objective: policy learning as a classification problem}
\label{sec:training}

Having established the theoretical properties of our proposed policy $\pi_{\lambda,\beta}^\star$, we now turn to formulate the training objective induced by this target. For completions in the support of $\pi_{\lambda,\beta}^\star$, rearranging~\eqref{eq:optimal_policy_lambda} gives
\begin{equation}
    \label{eq:logit_relation_lambda}
    w_{\lambda,r}(x,y)
    \;=\;
     \sigma\left( \beta\log \frac{\pi_{\lambda,\beta}^\star(y \mid x)}{\pi_{\mathrm{ref}}(y \mid x)}
    +
    \beta \log Z_{\lambda,\beta}\right),
\end{equation}
where $\sigma$ is the sigmoid function and $\mathrm{\logit}$ is its inverse on $(0,1)$, so $\sigma(\mathrm{\logit}(x)) = x$ for $x\in(0,1)$, with endpoint limits $0$ and $1$ as $x\to0^+$ and $x\to1^-$.
The above relation suggests that we can fit a parameterized policy $\pi_\theta$ that estimates $\pi_{\lambda,\beta}^\star$ using logistic regression of the shifted-truncated win-rate. Indeed, we can define our classifier as
\begin{equation}
    p_\theta(x,y)
    \;:=\;
    \sigma\left(\beta \log \frac{\pi_\theta(y \mid x)}{\pi_{\mathrm{ref}}(y \mid x)}
    +
   \beta \log Z_{\lambda,\beta}\right),
\end{equation}
where $\beta \log Z_{\lambda,\beta}$ acts as a constant and known intercept. For empirical finite-pool ranks, the normalizer has an exact finite-$K$ analogue of the same form, with the integral replaced by a discrete average over rank locations. As in QRPO, our reported experiments use the population intercept, while Appendix~\ref{app:normalizer_incomplete_beta} gives the finite-pool alternative. Empirically, the population and finite-pool normalizers performs similarly to the analytical normalizer, except at extremely small $\beta$, where the finite-pool form is needed to keep the normalizer finite. 

The construction above naturally results in a fully offline binary cross-entropy (BCE) objective with the shifted-truncated win-rate as a soft label:
\begin{tcolorbox}[colback=boxbg, colframe=black, boxrule=1pt, arc=10pt]
The \textbf{BCE TUP distillation} objective is:
\begin{equation}
    \label{eq:bce_objective}
    \mathcal{L}(\theta)
    \;=\;
    \mathbb{E}_{x\sim \mathcal D, y\sim\pi_{\mathrm{ref}}(\cdot \mid x)}
    \Bigl[
    -w_{\lambda,r}(x,y)\log \!\bigl(p_\theta(x,y)\bigr)
    -(1-w_{\lambda,r}(x,y))\log\bigl(1-p_\theta(x,y)\bigr)
    \Bigr]. 
\end{equation}
\end{tcolorbox}

This loss requires only precomputed scalar rank labels $w_{\lambda,r}(x,y)$ and the global intercept $\beta \log Z_{\lambda,\beta}$. It requires no pairwise preferences, no online sampling, and no runtime estimation of a prompt-dependent partition function.

Algorithm~\ref{alg:bce_alignment} summarizes the resulting fully offline training procedure. In practice, for each prompt we first sample a reference pool, compute each completion's empirical in-pool win-rate $\hat{w}_r(x,y)$ against the other $K-1$ completions in that same pool, apply the global truncation $\lambda$, and then fit $\pi_\theta$ by binary cross-entropy on the distilled-to-reference log-likelihood ratio.

\section{Experiments}
\label{sec:exps}

\begin{table}[!b]
\centering
\caption{In-dataset evaluation for Llama 8B Tülu 3 SFT trained separately on UltraFeedback and Magpie Air. We report the LC reward metric for each reward model.}
\label{tab:llama_indataset_lc_reward}
\resizebox{\textwidth}{!}{%
\begin{tabular}{l ccc @{\hspace{1.0em}} ccc}
\toprule
& \multicolumn{3}{c}{\textbf{UltraFeedback eval split}} 
& \multicolumn{3}{c}{\textbf{Magpie Air eval split}} \\
\cmidrule(lr){2-4} \cmidrule(lr){5-7}
Method 
& ArmoRM & Skywork-Llama & Skywork-Qwen
& ArmoRM & Skywork-Llama & Skywork-Qwen \\
\midrule

Initial & $0.1300{\scriptstyle\pm 0.0008}$ & $9.18{\scriptstyle\pm 0.24}$ & $3.80{\scriptstyle\pm 0.19}$ & $0.1531{\scriptstyle\pm 0.0012}$ & $10.14{\scriptstyle\pm 0.30}$ & $5.08{\scriptstyle\pm 0.25}$ \\

\midrule

DPO & $0.1499{\scriptstyle\pm 0.0001}$ & $16.16{\scriptstyle\pm 0.06}$ & $7.97{\scriptstyle\pm 0.03}$ & $0.1727{\scriptstyle\pm 0.0005}$ & $16.33{\scriptstyle\pm 0.15}$ & $9.89{\scriptstyle\pm 0.05}$ \\

REBEL & $0.1494{\scriptstyle\pm 0.0003}$ & $16.37{\scriptstyle\pm 0.14}$ & $8.01{\scriptstyle\pm 0.07}$ & $0.1709{\scriptstyle\pm 0.0007}$ & $17.32{\scriptstyle\pm 0.34}$ & $9.94{\scriptstyle\pm 0.15}$ \\

REBEL (random) & $0.1496{\scriptstyle\pm 0.0005}$ & $16.84{\scriptstyle\pm 0.14}$ & $8.14{\scriptstyle\pm 0.08}$ & $0.1718{\scriptstyle\pm 0.0001}$ & $15.08{\scriptstyle\pm 0.19}$ & $9.27{\scriptstyle\pm 0.06}$ \\

QRPO & $0.1521{\scriptstyle\pm 0.0003}$ & $17.20{\scriptstyle\pm 0.29}$ & $8.22{\scriptstyle\pm 0.08}$ & $0.1748{\scriptstyle\pm 0.0003}$ & $16.32{\scriptstyle\pm 0.29}$ & $9.07{\scriptstyle\pm 0.18}$ \\

QRPO (random) & $\mathbf{0.1522{\scriptstyle\pm 0.0004}}$ & $16.92{\scriptstyle\pm 0.08}$ & $8.05{\scriptstyle\pm 0.07}$ & $0.1695{\scriptstyle\pm 0.0005}$ & $15.31{\scriptstyle\pm 0.30}$ & $8.54{\scriptstyle\pm 0.24}$ \\

BoNBoN & $0.1409{\scriptstyle\pm 0.0003}$ & $12.74{\scriptstyle\pm 0.05}$ & $5.92{\scriptstyle\pm 0.04}$ & $0.1641{\scriptstyle\pm 0.0008}$ & $13.76{\scriptstyle\pm 0.20}$ & $7.60{\scriptstyle\pm 0.07}$ \\

\midrule

\textbf{TUP mild} & $0.1500{\scriptstyle\pm 0.0006}$ & $17.47{\scriptstyle\pm 0.28}$ & $8.38{\scriptstyle\pm 0.12}$ & $0.1798{\scriptstyle\pm 0.0006}$ & $20.68{\scriptstyle\pm 0.46}$ & $12.27{\scriptstyle\pm 0.25}$ \\

\textbf{TUP mid.} & $0.1499{\scriptstyle\pm 0.0002}$ & $17.45{\scriptstyle\pm 0.13}$ & $8.39{\scriptstyle\pm 0.07}$ & $\mathbf{0.1806{\scriptstyle\pm 0.0013}}$ & $\mathbf{21.24{\scriptstyle\pm 0.20}}$ & $\mathbf{12.49{\scriptstyle\pm 0.11}}$ \\

\textbf{TUP aggressive} & $0.1499{\scriptstyle\pm 0.0003}$ & $\mathbf{17.54{\scriptstyle\pm 0.16}}$ & $\mathbf{8.43{\scriptstyle\pm 0.05}}$ & $0.1794{\scriptstyle\pm 0.0037}$ & $19.64{\scriptstyle\pm 0.58}$ & $11.38{\scriptstyle\pm 0.24}$ \\

\bottomrule
\end{tabular}}
\end{table}
\begin{table}[t]
\centering
\caption{AlpacaEval performance for Llama 8B Tülu 3 SFT trained on UltraFeedback (top) and Magpie Air (bottom). We report the LC reward metric for each reward model, and win-rate using \texttt{gpt-4o} judge results; length is measured in characters. Methods marked ``random'' are trained on a random pair of completions from each pool, whereas unmarked pairwise methods use the best and worst completions. TUP is trained on a single random completion per pool.}
\label{tab:llama_alpacaeval}
\vspace{5pt}
\small
\setlength{\tabcolsep}{4.5pt}
\renewcommand{\arraystretch}{1.05}
\resizebox{0.9\textwidth}{!}{%
\begin{tabular}{c l ccc ccc}
\toprule
& & \multicolumn{3}{c}{Reward model} & \multicolumn{3}{c}{GPT Judge} \\
\cmidrule(lr){3-5} \cmidrule(lr){6-8}
Train set & Method 
& ArmoRM & Skywork-Llama & Skywork-Qwen 
& LC Win & Win & Len \\
\midrule

\multirow{10}{*}{\rotatebox[origin=c]{90}{UltraFeedback}} & Initial & $0.1448{\scriptstyle\pm 0.0030}$ & $13.44{\scriptstyle\pm 1.60}$ & $6.65{\scriptstyle\pm 0.89}$ & $15.18{\scriptstyle\pm 0.36}$ & $8.57{\scriptstyle\pm 0.98}$ & 1011 \\
\cmidrule[0.01pt](lr){2-8}
 & DPO & $0.1619{\scriptstyle\pm 0.0001}$ & $21.59{\scriptstyle\pm 0.16}$ & $10.91{\scriptstyle\pm 0.08}$ & $\mathbf{42.30{\scriptstyle\pm 0.16}}$ & $37.08{\scriptstyle\pm 1.70}$ & 1807 \\
 & REBEL & $0.1620{\scriptstyle\pm 0.0002}$ & $22.15{\scriptstyle\pm 0.12}$ & $11.19{\scriptstyle\pm 0.08}$ & $41.81{\scriptstyle\pm 0.12}$ & $39.81{\scriptstyle\pm 1.73}$ & 1938 \\
 & REBEL (random) & $0.1625{\scriptstyle\pm 0.0003}$ & $22.50{\scriptstyle\pm 0.18}$ & $11.31{\scriptstyle\pm 0.05}$ & $\mathbf{42.44{\scriptstyle\pm 0.15}}$ & $40.93{\scriptstyle\pm 1.73}$ & 1959 \\
 & QRPO & $\mathbf{0.1668{\scriptstyle\pm 0.0000}}$ & $20.69{\scriptstyle\pm 0.19}$ & $10.23{\scriptstyle\pm 0.13}$ & $39.64{\scriptstyle\pm 0.16}$ & $\mathbf{53.17{\scriptstyle\pm 1.76}}$ & 3100 \\
 & QRPO (random) & $0.1663{\scriptstyle\pm 0.0004}$ & $22.26{\scriptstyle\pm 0.02}$ & $11.11{\scriptstyle\pm 0.03}$ & $\mathbf{42.35{\scriptstyle\pm 0.13}}$ & $48.51{\scriptstyle\pm 1.76}$ & 2436 \\
 & BoNBoN & $0.1503{\scriptstyle\pm 0.0028}$ & $16.75{\scriptstyle\pm 0.30}$ & $8.44{\scriptstyle\pm 0.19}$ & $23.28{\scriptstyle\pm 0.43}$ & $15.40{\scriptstyle\pm 1.27}$ & 1325 \\
\cmidrule[0.5pt](lr){2-8}
 & \textbf{TUP mild} & $0.1638{\scriptstyle\pm 0.0000}$ & $22.85{\scriptstyle\pm 0.37}$ & $11.41{\scriptstyle\pm 0.15}$ & $39.41{\scriptstyle\pm 0.14}$ & $49.50{\scriptstyle\pm 1.76}$ & 2730 \\
 & \textbf{TUP mid.} & $0.1633{\scriptstyle\pm 0.0005}$ & $\mathbf{23.05{\scriptstyle\pm 0.23}}$ & $\mathbf{11.54{\scriptstyle\pm 0.05}}$ & $40.27{\scriptstyle\pm 0.06}$ & $49.13{\scriptstyle\pm 1.76}$ & 2670 \\
 & \textbf{TUP aggressive} & $0.1638{\scriptstyle\pm 0.0004}$ & $22.90{\scriptstyle\pm 0.31}$ & $11.46{\scriptstyle\pm 0.19}$ & $\mathbf{42.36{\scriptstyle\pm 0.14}}$ & $51.24{\scriptstyle\pm 1.76}$ & 2709 \\

\midrule

\midrule

\multirow{10}{*}{\rotatebox[origin=c]{90}{Magpie Air}} & Initial & $0.1448{\scriptstyle\pm 0.0030}$ & $13.44{\scriptstyle\pm 1.60}$ & $6.65{\scriptstyle\pm 0.89}$ & $15.18{\scriptstyle\pm 0.36}$ & $8.57{\scriptstyle\pm 0.98}$ & 1011 \\
\cmidrule[0.2pt](lr){2-8}
 & DPO & $0.1581{\scriptstyle\pm 0.0008}$ & $18.94{\scriptstyle\pm 0.16}$ & $9.73{\scriptstyle\pm 0.06}$ & $32.11{\scriptstyle\pm 0.25}$ & $26.15{\scriptstyle\pm 1.55}$ & 1644 \\
 & REBEL & $0.1591{\scriptstyle\pm 0.0007}$ & $20.65{\scriptstyle\pm 0.19}$ & $10.37{\scriptstyle\pm 0.12}$ & $39.66{\scriptstyle\pm 0.21}$ & $36.58{\scriptstyle\pm 1.70}$ & 1872 \\
 & REBEL (random) & $0.1565{\scriptstyle\pm 0.0002}$ & $17.81{\scriptstyle\pm 0.14}$ & $9.15{\scriptstyle\pm 0.07}$ & $31.37{\scriptstyle\pm 0.26}$ & $26.21{\scriptstyle\pm 1.55}$ & 1675 \\
 & QRPO & $0.1578{\scriptstyle\pm 0.0000}$ & $19.00{\scriptstyle\pm 0.08}$ & $9.30{\scriptstyle\pm 0.00}$ & $32.39{\scriptstyle\pm 0.13}$ & $32.48{\scriptstyle\pm 1.65}$ & 1992 \\
 & QRPO (random) & $0.1553{\scriptstyle\pm 0.0009}$ & $18.64{\scriptstyle\pm 0.31}$ & $9.24{\scriptstyle\pm 0.15}$ & $27.97{\scriptstyle\pm 0.27}$ & $23.54{\scriptstyle\pm 1.49}$ & 1628 \\
 & BoNBoN & $0.1446{\scriptstyle\pm 0.0011}$ & $14.92{\scriptstyle\pm 0.22}$ & $7.32{\scriptstyle\pm 0.17}$ & $19.82{\scriptstyle\pm 0.42}$ & $13.73{\scriptstyle\pm 1.21}$ & 1346 \\
\cmidrule[0.5pt](lr){2-8}
 & \textbf{TUP mild} & $0.1585{\scriptstyle\pm 0.0003}$ & $20.58{\scriptstyle\pm 0.36}$ & $10.21{\scriptstyle\pm 0.15}$ & $36.82{\scriptstyle\pm 0.02}$ & $\mathbf{43.29{\scriptstyle\pm 1.75}}$ & 2712 \\
 & \textbf{TUP mid.} & $0.1585{\scriptstyle\pm 0.0006}$ & $\mathbf{21.08{\scriptstyle\pm 0.44}}$ & $\mathbf{10.44{\scriptstyle\pm 0.08}}$ & $35.81{\scriptstyle\pm 0.04}$ & $42.55{\scriptstyle\pm 1.74}$ & 2645 \\
 & \textbf{TUP aggressive} & $0.1574{\scriptstyle\pm 0.0006}$ & $19.76{\scriptstyle\pm 0.39}$ & $9.77{\scriptstyle\pm 0.13}$ & $32.31{\scriptstyle\pm 0.03}$ & $38.94{\scriptstyle\pm 1.72}$ & 2626 \\
\bottomrule
\end{tabular}}
\end{table}

\paragraph{Experimental setup.}
We build on the QRPO experimental framework of~\citet{matrenok_quantile_2025}, which evaluates offline alignment methods across multiple models and datasets. We compare TUP with six baselines across two model families, two datasets, three reward-model evaluators, and a GPT judge. We use UltraFeedback~\citep{cui2024ultrafeedback} and Magpie Air~\citep{xu2025magpie} as training datasets. In our experiments, we apply each alignment method to Llama 8B Tülu 3 SFT~\citep{lambert2025tulu} separately on both datasets and to Mistral-7B~\citep{jiang2023mistral7b} on Magpie Air; for Mistral, we first perform dedicated supervised fine-tuning (SFT) to obtain the initial reference policy.

Following QRPO, we use their published datasets, which use ArmoRM~\citep{wang_interpretable_2024} as the offline reward model for scoring the reference-model training completions. To test generalization beyond the training reward model, we evaluate using two Skywork-v2 reward models, Skywork-Reward-V2-Llama-3.1-8B and Skywork-Reward-V2-Qwen3-8B~\citep{liu_skywork-reward-v2_2025}, referred to as Skywork-Llama and Skywork-Qwen respectively. These models are ranked \#1 and \#2, respectively, among available classifier reward models on the RewardBench leaderboard at the time of writing~\citep{lambert_rewardbench_2025}.\footnote{\url{https://huggingface.co/spaces/allenai/reward-bench}} We additionally evaluate on AlpacaEval using \texttt{gpt-4o} as a judge~\citep{zheng2023judging, hurst2024gpt}. Table~\ref{tab:rewardbench} of the Appendix reports the RewardBench standings of all reward and judge models in our experiments. We report length-controlled (LC) rewards in the main tables; raw reward results are given in Appendix~\ref{appx:raw_rewards}.

The most natural baselines for our setting are rank-based distillation methods. We include BoNBoN~\citep{gui_bonbon_2024} and the QRPO variant that computes the loss with random completions~\citep{matrenok_quantile_2025}, denoted QRPO (random). While less related, we also compare to strong offline-alignment baselines using pairwise or relative-reward signals, namely DPO~\citep{rafailov_direct_2023} and REBEL~\citep{gao_rebel_2024}. Unless marked random, DPO and REBEL use the best--worst pair from each pool; REBEL (random) uses random pairs. We also include QRPO trained on best--worst pairs. For all DPO, REBEL, and QRPO variants, we adopt the best-performing configurations from the \emph{off-policy} setting reported in~\citet{matrenok_quantile_2025}. To avoid clutter, tables omit the best--worst label when it is the default. We do not include SimPO~\citep{meng_simpo_2024}, whose length-normalized objective makes it structurally different from baselines that do not explicitly account for length. Because BoNBoN is not covered by the QRPO benchmark, we run a dedicated hyperparameter search and select the best checkpoint by validation reward. Full implementation details are provided in Appendix~\ref{appx:imp}.

In all tables, unless stated otherwise, we show TUP with 3 different global truncation values; $\lambda=0.2$ (mild) which truncates only the worst rollout out of the 6 completions,  $\lambda=0.5$ (mid.) and $\lambda=0.8$ (aggressive) that keeps only the top two completions. 

\begin{table}[th]
\centering
\caption{LC rewards for Mistral 7B SFT, trained on Magpie Air. Tested on in-dataset test split (MA) and AlpacaEval (AE) evaluated by ArmoRM, Skywork-Llama and Skywork-Qwen reward models, with AlpacaEval also judged by \texttt{gpt-4o}.}
\label{tab:mistral_sft_magpie_lc_reward}
\resizebox{\textwidth}{!}{%
\begin{tabular}{l cc cc cc c c c}
\toprule
& \multicolumn{2}{c}{ArmoRM} & \multicolumn{2}{c}{Skywork-v2-Llama} & \multicolumn{2}{c}{Skywork-v2-Qwen} & \multicolumn{3}{c}{GPT Judge} \\
\cmidrule(lr){2-3} \cmidrule(lr){4-5} \cmidrule(lr){6-7} \cmidrule(lr){8-10}
Method & MA & AE & MA & AE & MA & AE & LC Win & Win & Len \\
\midrule
Initial & $0.1646{\scriptstyle\pm 0.0006}$ & $0.1477{\scriptstyle\pm 0.0002}$ & $17.36{\scriptstyle\pm 0.11}$ & $16.53{\scriptstyle\pm 0.19}$ & $11.57{\scriptstyle\pm 0.07}$ & $8.76{\scriptstyle\pm 0.09}$ & $21.94{\scriptstyle\pm 0.27}$ & $18.76{\scriptstyle\pm 1.38}$ & $1736$ \\
\midrule
DPO  & $0.1792{\scriptstyle\pm 0.0008}$ & $0.1583{\scriptstyle\pm 0.0010}$ & $22.62{\scriptstyle\pm 0.22}$ & $21.50{\scriptstyle\pm 0.11}$ & $16.94{\scriptstyle\pm 0.19}$ & $12.43{\scriptstyle\pm 0.05}$ & $31.80{\scriptstyle\pm 0.05}$ & $37.14{\scriptstyle\pm 1.70}$ & $2860$ \\
REBEL & $0.1732{\scriptstyle\pm 0.0021}$ & $0.1462{\scriptstyle\pm 0.0016}$ & $20.83{\scriptstyle\pm 0.16}$ & $13.58{\scriptstyle\pm 0.19}$ & $14.08{\scriptstyle\pm 0.15}$ & $7.51{\scriptstyle\pm 0.15}$ & $22.11{\scriptstyle\pm 0.02}$ & $19.25{\scriptstyle\pm 1.39}$ & $8470$ \\
REBEL (random) & $0.1822{\scriptstyle\pm 0.0026}$ & $0.1588{\scriptstyle\pm 0.0004}$ & $\mathbf{24.64{\scriptstyle\pm 0.23}}$ & $\mathbf{21.91{\scriptstyle\pm 0.46}}$ & $\mathbf{18.32{\scriptstyle\pm 0.26}}$ & $\mathbf{12.67{\scriptstyle\pm 0.18}}$ & $34.45{\scriptstyle\pm 0.12}$ & $40.12{\scriptstyle\pm 1.73}$ & $3013$ \\
QRPO & $0.1720{\scriptstyle\pm 0.0005}$ & $0.1565{\scriptstyle\pm 0.0002}$ & $21.09{\scriptstyle\pm 0.12}$ & $20.29{\scriptstyle\pm 0.41}$ & $14.23{\scriptstyle\pm 0.03}$ & $11.02{\scriptstyle\pm 0.14}$ & $31.97{\scriptstyle\pm 0.19}$ & $34.78{\scriptstyle\pm 1.68}$ & $2261$ \\
QRPO (random) & $0.1858{\scriptstyle\pm 0.0013}$ & $0.1425{\scriptstyle\pm 0.0078}$ & $21.96{\scriptstyle\pm 0.22}$ & $17.23{\scriptstyle\pm 1.53}$ & $15.86{\scriptstyle\pm 0.20}$ & $9.54{\scriptstyle\pm 1.04}$ & $27.53{\scriptstyle\pm 0.12}$ & $29.07{\scriptstyle\pm 1.60}$ & $4740$ \\
BoNBoN & $0.1667{\scriptstyle\pm 0.0004}$ & $0.1494{\scriptstyle\pm 0.0006}$ & $18.40{\scriptstyle\pm 0.32}$ & $17.46{\scriptstyle\pm 0.13}$ & $12.35{\scriptstyle\pm 0.35}$ & $9.34{\scriptstyle\pm 0.09}$ & $24.85{\scriptstyle\pm 0.21}$ & $22.11{\scriptstyle\pm 1.46}$ & $1828$ \\
\midrule
\textbf{TUP mild} & $\mathbf{0.1864{\scriptstyle\pm 0.0004}}$ & $0.1612{\scriptstyle\pm 0.0032}$ & $23.47{\scriptstyle\pm 0.15}$ & $21.04{\scriptstyle\pm 0.56}$ & $17.01{\scriptstyle\pm 0.10}$ & $11.75{\scriptstyle\pm 0.35}$ & $\mathbf{36.84{\scriptstyle\pm 0.04}}$ & $\mathbf{40.62{\scriptstyle\pm 1.73}}$ & $3382$ \\
\textbf{TUP mid.} & $0.1832{\scriptstyle\pm 0.0003}$ & $0.1620{\scriptstyle\pm 0.0004}$ & $22.81{\scriptstyle\pm 0.18}$ & $21.34{\scriptstyle\pm 0.34}$ & $16.96{\scriptstyle\pm 0.08}$ & $12.25{\scriptstyle\pm 0.11}$ & $35.69{\scriptstyle\pm 0.05}$ & $39.25{\scriptstyle\pm 1.72}$ & $2765$ \\
\textbf{TUP aggressive} & $0.1826{\scriptstyle\pm 0.0002}$ & $\mathbf{0.1625{\scriptstyle\pm 0.0024}}$ & $21.32{\scriptstyle\pm 0.12}$ & $20.50{\scriptstyle\pm 0.76}$ & $16.00{\scriptstyle\pm 0.02}$ & $11.54{\scriptstyle\pm 0.34}$ & $33.22{\scriptstyle\pm 0.14}$ & $39.63{\scriptstyle\pm 1.73}$ & $2688$ \\
\bottomrule
\end{tabular}}
\end{table}

\paragraph{Results.}
Table~\ref{tab:llama_indataset_lc_reward} reports two in-dataset experiments on Llama 8B, where each method is evaluated on the held-out split of its respective training dataset, UltraFeedback~\citep{cui2024ultrafeedback} or Magpie Air~\citep{xu2025magpie}. As shown, TUP is competitive with strong offline alignment baselines on both datasets. Under ArmoRM, all methods achieve broadly similar results. Larger differences appear under the two Skywork reward models. On both UltraFeedback and Magpie Air, TUP obtains the best in-dataset results under both Skywork-Llama and Skywork-Qwen. If these reward models are viewed as different proxies for the oracle reward, the results suggest that TUP generalizes well across proxy rewards compared to the baselines considered.

Table~\ref{tab:llama_alpacaeval} reports AlpacaEval results for the above mentioned trained Llama models. Under both datasets, TUP obtains the best results under both Skywork-Llama and Skywork-Qwen, including on AlpacaEval LC reward, and remains highly competitive under the GPT judge. While the Magpie-Air-trained models on AlpacaEval results are more mixed, TUP remains competitive. The supplementary raw reward tables in Appendix~\ref{appx:raw_rewards} provide the corresponding non-LC scores and in-dataset response lengths, showing that the main trends are not limited to the LC metric. Table~\ref{tab:mistral_sft_magpie_lc_reward}, which reports both in-dataset and AlpacaEval LC rewards, shows that TUP remains competitive when applied to the Mistral model family.

In addition to the LC-reward results reported above, we conduct a pairwise length-matched reward comparison to further isolate TUP's reward gains from the effect of its longer average responses. As reported in Table~\ref{tab:length-matched} of the Appendix, TUP responses are largely preferred against similar-length responses from the baseline methods on the same prompt. This provides additional evidence that our advantage does not stem from response length alone. Full details are in Section~\ref{appx:length-matched} of the Appendix.

Figure~\ref{fig:lambda_beta_sweep} shows the effects of TUP's truncation threshold $\lambda$ and preference sharpness $\beta$, using the same validation-based checkpoint-selection protocol as the main experiments (Tables~\ref{tab:llama_indataset_lc_reward}--\ref{tab:llama_magpie_raw_reward}). To approximate the no-truncation setting, we use $\lambda=0.2$, which removes only the lowest-ranked of the six responses, a response that would otherwise receive an extremely low weight. To approximate ``pure'' truncation, we use $\beta=100$, which produces an approximately uniform weighting over the retained tail while still remaining finite and allowing for some deviation from $\piref$. Generally, intermediate values of both $\lambda$ and $\beta$ tend to yield the highest LC rewards. These results suggest that the best performance comes from combining moderate truncation with non-uniform within-tail reweighting, supporting our choice to treat $\lambda$ and $\beta$ as separate parameters.

\begin{figure}[tp]
     \centering
     % First Subfigure
     \begin{subfigure}[t]{0.80\textwidth}
         \centering
         \includegraphics[width=\textwidth]{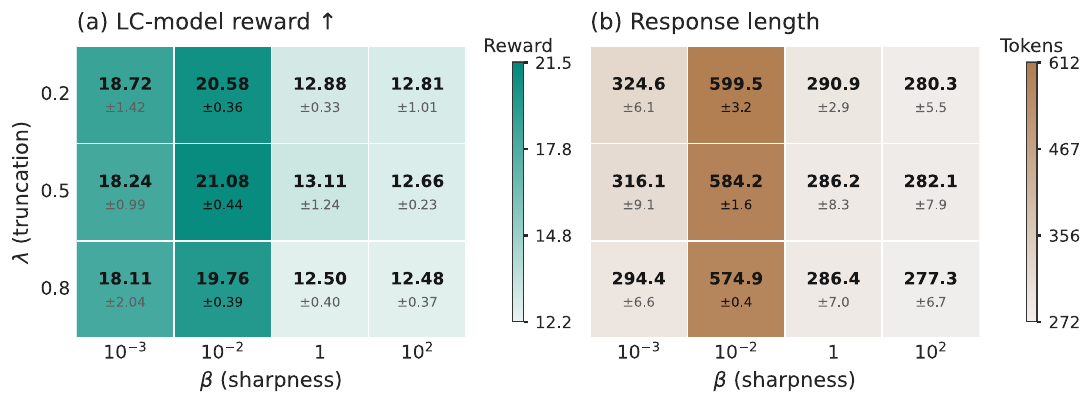}
     \end{subfigure}
    \caption{Ablation study of TUP's truncation threshold $\lambda$ and sharpness $\beta$ for Llama 8B Tülu 3 SFT trained on Magpie Air and evaluated on AlpacaEval with Skywork-Llama reward model. TUP uses $K=6$ reference completions per prompt. Cells report the mean and sample standard deviation across three generation seeds. \textbf{Left:} LC rewards; higher is better. \textbf{Right:} Mean response length in tokens.}
\label{fig:lambda_beta_sweep}
\end{figure}

\section{Discussion}
\label{sec:discussion}

One limitation of our method is that both $\lambda$ and $\beta$ must be tuned based on validation performance, whereas other baselines tune only $\beta$. This may increase the computational cost when a suitable initial range for $\lambda$ is unknown. A practical way to narrow this range is to construct a plot such as the right panel of Figure~\ref{fig:top_bot_disagreement}. Given a dataset, a proxy reward model, and an auxiliary reward model treated as an ``oracle,'' this plot provides an efficient way to identify a promising range of truncation levels. Once a truncation level that retains responses ranked highly by the ``oracle'' while removing poorly ranked ones is identified, the remaining hyperparameter search is once again only on $\beta$ values.

TUP also leaves several directions for future work. First, although TUP improves rank-based policy distillation, it is still exposed to reward hacking; incorporating reward-hacking mitigation methods could further improve its performance. Second, future work could replace the fixed global $\lambda$ and $\beta$ with potentially learned prompt-specific parameters.

More broadly, TUP may support safety alignment when unsafe or otherwise undesirable completions are consistently assigned low proxy ranks, allowing truncation to remove them from the target support rather than merely down-weight them.

\bibliographystyle{plainnat}
\bibliography{references_}
\newpage

\appendix

\section{Theory and proofs}
\label{appx:proofs}

\subsection{Normalization constant for Proposition~\ref{prop:optimal_policy_lambda}}
\label{app:normalizer_incomplete_beta}

\begin{proof}
By the Gibbs solution in~\eqref{eq:gibbs_policy}, substituting $r=R_\lambda$ yields
\[
  \pi^\star(y \mid x)
  \;=\;
  \frac{1}{Z(x)}\,\pi_{\mathrm{ref}}(y \mid x)\exp\!\left(\frac{R_\lambda(x,y)}{\beta}\right)
  \;=\;
  \frac{1}{Z(x)}\,\pi_{\mathrm{ref}}(y \mid x)
  \left(\frac{w_{\lambda,r}(x,y)}{1-w_{\lambda,r}(x,y)}\right)^{1/\beta}.
\]
It therefore remains to identify the normalizer $Z(x)$ and show that it is independent of $x$. Under the induced uniform win-rate variable $w \sim \mathrm{Uniform}[0,1]$, the normalization term becomes
\begin{equation}
  Z(x)
  \;=\;
  \int_0^1
    \left(
      \frac{\max(w - \lambda,\, 0)}{1 - \max(w - \lambda,\, 0)}
    \right)^{\!1/\beta}
  dw.
\end{equation}
For $w \le \lambda$, the shifted win-rate is zero, so the integrand vanishes. Therefore,
\begin{equation}
  Z(x)
  \;=\;
  \int_\lambda^1
    \left(
      \frac{w - \lambda}{1 - (w - \lambda)}
    \right)^{\!1/\beta}
  dw.
\end{equation}
Applying the change of variables $u := w - \lambda$ gives
\begin{equation}
  Z(x)
  \;=\;
  \int_0^{1-\lambda}
    \left(\frac{u}{1-u}\right)^{\!1/\beta}
  du
  \;=\;
  \int_0^{1-\lambda}
    u^{\,1/\beta}(1-u)^{-1/\beta}
  \,du.
\end{equation}

Recalling that the incomplete Beta function is defined in Proposition~\ref{prop:optimal_policy_lambda} as $B_z(a,b) = \int_0^z t^{a-1}(1-t)^{b-1}\,dt$, we identify
\[
  a = 1 + \tfrac{1}{\beta},
  \qquad
  b = 1 - \tfrac{1}{\beta}.
\]
Hence
\begin{equation}
  Z(x)
  \;=\;
  \int_0^{1-\lambda}
    u^{\,(1+1/\beta)-1}(1-u)^{\,(1-1/\beta)-1}
  \,du
  \;=\;
    \mathrm{Beta}_{1-\lambda}\!\Bigl(1+\tfrac{1}{\beta},\, 1-\tfrac{1}{\beta}\Bigr)
  \;=\;
  Z_{\lambda,\beta},
\end{equation}
which is independent of $x$.
\end{proof}

\paragraph{Finite-pool normalizer.}
The closed-form normalizer in Proposition~\ref{prop:optimal_policy_lambda} is the population normalizer induced by the continuous win-rate $w_r(x,Y)$. In the experiments, labels are computed from finite prompt-specific pools. Under the empirical-rank convention in Algorithm~\ref{alg:bce_alignment}, the corresponding discrete prompt-independent normalizer is
\begin{equation*}
Z_{K,\lambda,\beta}
=
\frac{1}{K}\sum_{j=1}^{K}
\left(
\frac{(\frac{j}{K}-\lambda)_+}
{1-(\frac{j}{K}-\lambda)_+}
\right)^{1/\beta}.
\end{equation*}
This finite-pool form can replace $Z_{\lambda,\beta}$ in the BCE intercept without changing the labels, the truncation support, or the loss. In our experiments, we use the population normalizer $Z_{\lambda,\beta}$, following the continuous win-rate normalization used in QRPO; in our experiments, replacing it with the finite-pool expression led to negligible performance differences. In extremely small-$\beta$ regimes, where the continuous calculation can become numerically unstable, the discrete expression above provides the exact finite-$K$ alternative for this empirical-rank convention.

\subsection{Formal oracle rank-space results for Section~\ref{sec:oracle_win-rate_theory}}
\label{app:oracle_win-rate_theory}

This appendix formalizes the oracle rank-space discussion from Section~\ref{sec:oracle_win-rate_theory}. The aim is to separate two decisions that smooth full-support reweighting conflates: which proxy ranks should remain in the support at all, and how the retained ranks should be reweighted. The first result shows that, among monotone proxy-rank reweightings, the oracle-best rule is always a hard lower-tail truncation. The second result is a fixed-threshold refinement statement: once a truncation level has been chosen, finite within-tail reweighting can improve over pure truncation when oracle value remains positively aligned with proxy rank inside the retained tail. We close with the quadratic worked example behind Figure~\ref{fig:theory}.

Fix a prompt $x$, and let $Y \sim \pi_{\mathrm{ref}}(\cdot \mid x)$. Define the proxy and oracle population win-rates by
\begin{equation*}
w_{r}(x,y)
:=
\mathbb P_{Y' \sim \pi_{\mathrm{ref}}(\cdot\mid x)}
\!\bigl(r(x,Y') \le r(x,y)\bigr),
\qquad
w_u(x,y)
:=
\mathbb P_{Y' \sim \pi_{\mathrm{ref}}(\cdot\mid x)}
\!\bigl(u(x,Y') \le u(x,y)\bigr).
\end{equation*}
and write
\begin{equation*}
W_r := w_{r}(x,Y),
\qquad
W_u := w_u(x,Y).
\end{equation*}
Since BoN-style methods act on relative rank rather than raw reward, the natural policy class in this analysis reweights a completion according to its proxy win-rate. For any measurable $g:[0,1]\to[0,\infty)$ satisfying
\begin{equation*}
0 < \mathbb E_{\pi_{\mathrm{ref}}}[g(W_r)] < \infty,
\end{equation*}
define
\begin{equation*}
\pi_g(y\mid x)
:=
\frac{g(w_{r}(x,y))\,\pi_{\mathrm{ref}}(y\mid x)}
{\mathbb E_{Y' \sim \pi_{\mathrm{ref}}(\cdot\mid x)}[g(w_{r}(x,Y'))]},
\qquad
\mathcal U_x(g)
:=
\mathbb E_{Y \sim \pi_g(\cdot\mid x)}[W_u].
\end{equation*}
Finally, let
\begin{equation*}
m_x(w) := \mathbb E[W_u \mid W_r = w]
\end{equation*}
denote a measurable version of the conditional oracle profile. Once the problem is reduced to rank space, $m_x$ is the only prompt-specific object that matters.

\subsubsection{Adaptive truncation among monotone reweightings}
\label{app:truncation_monotone_reweighting}

We first isolate the support-selection question of which proxy ranks should receive positive mass at all.

\begin{lemma}[Exact reduction to proxy-rank space]
\label{lem:oracle_rank_reduction}
For every measurable $g$ as above,
\begin{equation*}
\mathcal U_x(g)
=
\frac{\mathbb E_{\pi_{\mathrm{ref}}}[m_x(W_r)g(W_r)]}
{\mathbb E_{\pi_{\mathrm{ref}}}[g(W_r)]}.
\end{equation*}
If moreover the law of $r(x,Y)$ under $Y \sim \pi_{\mathrm{ref}}(\cdot \mid x)$ is continuous, then $W_r \sim \mathrm{Unif}[0,1]$, and therefore
\begin{equation*}
\mathcal U_x(g)
=
\frac{\int_0^1 m_x(w)g(w)\,dw}
{\int_0^1 g(w)\,dw}.
\end{equation*}
\end{lemma}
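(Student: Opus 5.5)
The identity is a change of measure followed by the tower property. Fix $x$ and write $N_g := \mathbb E_{\pi_{\mathrm{ref}}}[g(W_r)]$; by assumption $0<N_g<\infty$. By the definition of $\pi_g$,
\[
\mathcal U_x(g) = \mathbb E_{Y\sim\pi_g(\cdot\mid x)}[W_u] = \frac{\mathbb E_{Y\sim\pi_{\mathrm{ref}}(\cdot\mid x)}\bigl[g(W_r)\,W_u\bigr]}{N_g}.
\]
This holds because $\pi_g$ has Radon--Nikodym derivative $g(w_r(x,\cdot))/N_g$ with respect to $\pi_{\mathrm{ref}}(\cdot\mid x)$. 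In the discrete or density setting it is just rewriting the sum or integral over $y$. The numerator is finite because $0\le W_u\le 1$ and $g\ge 0$ is integrable.

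Next I condition on $W_r$. The variable $g(W_r)$ is $\sigma(W_r)$-measurable and nonnegative, and $W_u$ is bounded. The tower property then gives
\[
\mathbb E\bigl[g(W_r)W_u\bigr] = \mathbb E\bigl[g(W_r)\,\mathbb E[W_u\mid W_r]\bigr] = \mathbb E\bigl[g(W_r)\,m_x(W_r)\bigr].
\]
Here I use that $m_x$ is a measurable version of the conditional expectation, so $m_x(W_r)=\mathbb E[W_u\mid W_r]$ almost surely. Dividing by $N_g$ yields the first formula. The only subtlety is integrability when $g$ is unbounded: I handle it by monotone convergence applied to $g\wedge n$, or directly by noting that $0\le m_x\le 1$ almost surely, so $g\,m_x$ is dominated by $g$, which is integrable.

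For the second formula I show $W_r\sim\mathrm{Unif}[0,1]$ when the law of $r(x,Y)$ is continuous. Let $F$ be the CDF of $r(x,Y)$; then $W_r = F(r(x,Y))$. Continuity of $F$ together with the probability integral transform gives $\mathbb P(F(R)\le t)=t$ for $t\in[0,1]$. To check this, set $q(t)=\sup\{s:F(s)\le t\}$. Continuity gives $F(q(t))=t$, and $\{F(R)\le t\}=\{R\le q(t)\}$ up to a null set. That null set arises from flat pieces of $F$, which carry zero mass.

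With $W_r$ uniform, both expectations become Lebesgue integrals over $[0,1]$: $\mathbb E[m_x(W_r)g(W_r)]=\int_0^1 m_x g\,dw$ and $\mathbb E[g(W_r)]=\int_0^1 g\,dw$. The main point requiring care is this probability-integral-transform step on flat regions of $F$; the rest is routine measure theory.
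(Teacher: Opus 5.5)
Your proposal is correct and follows the paper's proof essentially step for step. You rewrite $\mathcal U_x(g)$ as $\mathbb E_{\pi_{\mathrm{ref}}}[W_u g(W_r)]/\mathbb E_{\pi_{\mathrm{ref}}}[g(W_r)]$, condition on $W_r$ via the tower property to replace $W_u$ by $m_x(W_r)$, and then invoke the probability integral transform to pass to Lebesgue integrals on $[0,1]$. Your extra care on integrability and on the transform is sound, and in fact slightly more than needed: with your choice $q(t)=\sup\{s:F(s)\le t\}$ for $t\in(0,1)$ one has $\{F(R)\le t\}=\{R\le q(t)\}$ exactly, so no null-set caveat is required.
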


\begin{proof}
By definition of $\pi_g$,
\begin{equation*}
\mathcal U_x(g)
=
\frac{\mathbb E_{\pi_{\mathrm{ref}}}[W_u g(W_r)]}
{\mathbb E_{\pi_{\mathrm{ref}}}[g(W_r)]}.
\end{equation*}
Applying conditional expectation with respect to $W_r$ gives
\begin{equation*}
\mathbb E_{\pi_{\mathrm{ref}}}[W_u g(W_r)]
=
\mathbb E_{\pi_{\mathrm{ref}}}\!\left[\mathbb E[W_u g(W_r)\mid W_r]\right]
=
\mathbb E_{\pi_{\mathrm{ref}}}\!\left[g(W_r)\mathbb E[W_u\mid W_r]\right]
=
\mathbb E_{\pi_{\mathrm{ref}}}[m_x(W_r)g(W_r)].
\end{equation*}
This proves the first identity. If the law of $r(x,Y)$ is continuous, the probability integral transform yields $W_r \sim \mathrm{Unif}[0,1]$, and the second identity follows by writing expectations as Lebesgue integrals on $[0,1]$.
\end{proof}

Under the continuity assumption, the problem becomes a pure rank-space optimization over densities on $[0,1]$. Let
\begin{equation*}
\mathcal F_\uparrow
:=
\left\{
f:[0,1]\to[0,\infty):
\int_0^1 f(w)\,dw = 1,\ \ f \text{ is nondecreasing}
\right\},
\end{equation*}
and for each threshold $\lambda \in [0,1)$, define the lower-tail truncation density
\begin{equation*}
\nu_\lambda(w) := \frac{1}{1-\lambda}\mathds{1}\{w \in [\lambda,1]\}.
\end{equation*}

The next lemma shows that every monotone density is a mixture of these top-interval uniforms. This is the structural reason hard truncation is enough.

\begin{lemma}[Mixture representation of nondecreasing densities]
\label{lem:mixture_monotone}
Let $f \in \mathcal F_\uparrow$. Then there exists a probability measure $\xi$ on $[0,1)$ such that
\begin{equation*}
f(w)=\int_{[0,1)} \nu_\lambda(w)\,\xi(d\lambda)
\qquad
\text{for a.e. } w\in[0,1].
\end{equation*}
\end{lemma}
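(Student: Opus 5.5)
We will build $\xi$ from the Lebesgue--Stieltjes measure of $f$ itself, a layer-cake argument. Since $f$ is nondecreasing, it has at most countably many discontinuities, so we may replace $f$ by its right-continuous version. This changes $f$ only on a null set, which is compatible with the a.e. conclusion.

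Set $f(0^-):=0$. Let $\mu$ be the nonnegative Borel measure on $[0,1)$ given by
\[
\mu([0,\lambda]) = f(\lambda),
\]
so that $\mu(\{0\})=f(0)\ge 0$ and $\mu((a,b])=f(b)-f(a)$. This measure is nonnegative because $f$ is nondecreasing and $f\ge 0$. Finiteness needs care: $f$ may blow up at $w=1$, but $f(\lambda)<\infty$ for every $\lambda<1$, so $\mu$ is at least locally finite on $[0,1)$. For $w\in(0,1)$ the layer-cake identity reads
\[
f(w)=\int_{[0,1)}\mathds{1}\{\lambda\le w\}\,\mu(d\lambda).
\]

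Next we convert indicators into the densities $\nu_\lambda$. Since $\mathds{1}\{\lambda\le w\}=(1-\lambda)\nu_\lambda(w)$ for $w\in[0,1]$, we define
\[
\xi(d\lambda):=(1-\lambda)\,\mu(d\lambda).
\]
Then $f(w)=\int \nu_\lambda(w)\,\xi(d\lambda)$ holds for a.e. $w$. It remains to show that $\xi$ is a probability measure. By Tonelli,
\[
1=\int_0^1 f(w)\,dw=\int_{[0,1)}\int_0^1\mathds{1}\{\lambda\le w\}\,dw\,\mu(d\lambda)=\int_{[0,1)}(1-\lambda)\,\mu(d\lambda)=\xi([0,1)).
\]
So $\xi$ is a probability measure. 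In particular it is finite, even if $\mu$ is not.

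The main obstacle is the measure-theoretic bookkeeping. First, $\mu$ must be well defined as a possibly infinite measure near $1$, which follows from local finiteness and Carathéodory or the Stieltjes construction on each $[0,1-\epsilon]$. Second, the change of variables must not lose mass: the jump at $\lambda=0$ has to be included through the atom $f(0)\delta_0$. Tonelli applies throughout because every integrand is nonnegative.
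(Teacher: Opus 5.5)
Your proposal is correct and is essentially the paper's proof: your $\xi(d\lambda)=(1-\lambda)\,\mu(d\lambda)$ with $\mu=f(0)\delta_0+df$ is exactly the paper's $\xi=f(0)\delta_0+(1-t)\,df(t)$, and your Tonelli computation of $\xi([0,1))=1$ gives the same identity the paper obtains by integration by parts. Passing first to the right-continuous version of $f$ makes rigorous the atom-at-zero bookkeeping that the paper leaves implicit. Since $f$ takes values in $[0,\infty)$ on all of $[0,1]$, it is bounded by $f(1)$, so your concern about $\mu$ being infinite near $1$ is unnecessary.
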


\begin{proof}
Define a finite Stieltjes measure on $[0,1)$ by
\begin{equation*}
\xi := f(0)\delta_0 + (1-t)\,df(t).
\end{equation*}
Since $f$ is nondecreasing and $\int_0^1 f(w)\,dw = 1$, integration by parts gives
\begin{equation*}
\xi([0,1))
=
f(0) + \int_{(0,1)} (1-t)\,df(t)
=
1.
\end{equation*}
Thus $\xi$ is a probability measure. For a.e. $w \in [0,1]$,
\begin{equation*}
\int_{[0,1)} \nu_\lambda(w)\,\xi(d\lambda)
=
\int_{[0,w]} \frac{1}{1-\lambda}\,\xi(d\lambda)
=
f(0) + \int_{(0,w]} df(\lambda)
=
f(w).
\end{equation*}
\end{proof}

\begin{theorem}[Hard truncation is optimal among monotone reweightings]
\label{thm:best_monotone_is_truncation}
Assume the law of $r(x,Y)$ under $Y \sim \pi_{\mathrm{ref}}(\cdot \mid x)$ is continuous. Then
\begin{equation*}
\sup_{f \in \mathcal F_\uparrow} \mathcal U_x(f)
=
\sup_{\lambda \in [0,1)} \mathcal U_x(\nu_\lambda).
\end{equation*}
Equivalently, in the oracle rank-space view, optimizing over monotone proxy-rank reweightings reduces to optimizing over hard lower-tail truncation rules.
\end{theorem}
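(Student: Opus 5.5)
The proof combines Lemma~\ref{lem:oracle_rank_reduction} with Lemma~\ref{lem:mixture_monotone}. The idea is that a linear functional evaluated at a mixture cannot exceed the supremum over the mixture's components.

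\textbf{Easy inequality.} Each $\nu_\lambda$ with $\lambda\in[0,1)$ is nonnegative, nondecreasing and integrates to one, so $\nu_\lambda\in\mathcal F_\uparrow$. Hence $\sup_{\lambda}\mathcal U_x(\nu_\lambda)\le\sup_{f\in\mathcal F_\uparrow}\mathcal U_x(f)$.

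\textbf{Reverse inequality.} Fix $f\in\mathcal F_\uparrow$. Every $f\in\mathcal F_\uparrow$ and every $\nu_\lambda$ has unit integral, so by Lemma~\ref{lem:oracle_rank_reduction} (using continuity, so $W_r\sim\mathrm{Unif}[0,1]$) we get
\[
\mathcal U_x(f)=\int_0^1 m_x(w)f(w)\,dw,
\qquad
\mathcal U_x(\nu_\lambda)=\int_0^1 m_x(w)\nu_\lambda(w)\,dw .
\]
Lemma~\ref{lem:mixture_monotone} gives a probability measure $\xi$ on $[0,1)$ with $f=\int\nu_\lambda\,\xi(d\lambda)$ almost everywhere. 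Substituting and applying Tonelli, which is legitimate because $m_x\in[0,1]$ and $\nu_\lambda\ge 0$, yields
\[
\mathcal U_x(f)=\int_{[0,1)}\Bigl(\int_0^1 m_x(w)\nu_\lambda(w)\,dw\Bigr)\xi(d\lambda)=\int_{[0,1)}\mathcal U_x(\nu_\lambda)\,\xi(d\lambda)\le\sup_{\lambda\in[0,1)}\mathcal U_x(\nu_\lambda),
\]
since $\xi$ is a probability measure. Taking the supremum over $f$ closes the argument.

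\textbf{Expected obstacle.} The only delicate point is the measure-theoretic bookkeeping. I would check three things:
\begin{itemize}
\item The map $\lambda\mapsto\mathcal U_x(\nu_\lambda)=\frac{1}{1-\lambda}\int_\lambda^1 m_x$ is measurable; it is in fact continuous on $[0,1)$.
\item The mixture identity holds only almost everywhere, which does not affect the integrals.
\item $f$ may be unbounded near $1$ while remaining integrable. This is harmless because Tonelli handles nonnegative integrands, and the mass of $\xi$ near $1$ is still finite by Lemma~\ref{lem:mixture_monotone}.
\end{itemize}
I would also note that the supremum over $\lambda$ need not be attained, since it may only be approached as $\lambda\to1$. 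This is why the statement is phrased in terms of suprema.
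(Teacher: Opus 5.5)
Your proof is correct and matches the paper's argument. The paper reduces $\mathcal U_x(f)$ to $\int_0^1 m_x f$ via Lemma~\ref{lem:oracle_rank_reduction}, writes $f$ as a $\xi$-mixture of the $\nu_\lambda$ via Lemma~\ref{lem:mixture_monotone}, swaps the integrals, bounds the average by the supremum, and gets the reverse inequality from $\nu_\lambda\in\mathcal F_\uparrow$. Your Tonelli justification from $m_x\in[0,1]$ is slightly cleaner than the paper's appeal to Fubini. One small aside: your concern that $f$ may be unbounded near $1$ cannot arise, because $f$ is nondecreasing on the closed interval $[0,1]$ with values in $[0,\infty)$ and is therefore bounded by $f(1)$.
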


\begin{proof}
By Lemma~\ref{lem:oracle_rank_reduction}, the continuity assumption implies
\begin{equation*}
\mathcal U_x(f)=\int_0^1 m_x(w)f(w)\,dw
\qquad
\text{for every } f \in \mathcal F_\uparrow.
\end{equation*}
Fix $f \in \mathcal F_\uparrow$. By Lemma~\ref{lem:mixture_monotone},
\begin{equation*}
f(w)=\int_{[0,1)} \nu_\lambda(w)\,\xi(d\lambda)
\qquad
\text{for a.e. } w,
\end{equation*}
for some probability measure $\xi$ on $[0,1)$. Therefore,
\begin{equation*}
\mathcal U_x(f)
=
\int_0^1 m_x(w)\left(\int_{[0,1)} \nu_\lambda(w)\,\xi(d\lambda)\right)dw.
\end{equation*}
By Fubini's theorem,
\begin{equation*}
\mathcal U_x(f)
=
\int_{[0,1)}\left(\int_0^1 m_x(w)\nu_\lambda(w)\,dw\right)\xi(d\lambda)
\le
\sup_{\lambda \in [0,1)} \mathcal U_x(\nu_\lambda).
\end{equation*}
Taking the supremum over $f \in \mathcal F_\uparrow$ gives
\begin{equation*}
\sup_{f \in \mathcal F_\uparrow}\mathcal U_x(f)
\le
\sup_{\lambda \in [0,1)} \mathcal U_x(\nu_\lambda).
\end{equation*}
The reverse inequality is immediate because each $\nu_\lambda$ belongs to $\mathcal F_\uparrow$.
\end{proof}

Theorem~\ref{thm:best_monotone_is_truncation} is an oracle statement, so it does not identify a single global threshold for practice. Its role is structural: in rank space, support restriction is the right family for the first decision, namely which proxy ranks to keep.

\begin{corollary}[QRPO comparison in the same rank-space model]
\label{cor:qrpo_rank_space}
Under the assumptions of Theorem~\ref{thm:best_monotone_is_truncation}, let
\begin{equation*}
q_\tau(w) := \frac{\tau e^{\tau w}}{e^\tau - 1},
\qquad \tau > 0,
\end{equation*}
with $q_0(w) \equiv 1$. Then
\begin{equation*}
\sup_{\tau \ge 0} \mathcal U_x(q_\tau)
\le
\sup_{\lambda \in [0,1)} \mathcal U_x(\nu_\lambda).
\end{equation*}
\end{corollary}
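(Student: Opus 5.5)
The plan is to show that every QRPO density $q_\tau$ belongs to the class $\mathcal F_\uparrow$ and then invoke Theorem~\ref{thm:best_monotone_is_truncation}. Concretely, for each fixed $\tau\ge 0$ I would establish
\[
\mathcal U_x(q_\tau)\le \sup_{f\in\mathcal F_\uparrow}\mathcal U_x(f)=\sup_{\lambda\in[0,1)}\mathcal U_x(\nu_\lambda),
\]
and then take the supremum over $\tau$ on the left-hand side.

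The membership check has three parts.

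\begin{enumerate}
\item \emph{Nonnegativity.} For $\tau>0$, the numerator $\tau e^{\tau w}$ and the denominator $e^\tau-1$ are both strictly positive, so $q_\tau(w)>0$ on $[0,1]$.
\item \emph{Monotonicity.} The derivative is $q_\tau'(w)=\tau^2 e^{\tau w}/(e^\tau-1)>0$, so $q_\tau$ is nondecreasing (in fact strictly increasing).
\item \emph{Normalization.} We have $\int_0^1 \tau e^{\tau w}\,dw=e^\tau-1$, so $\int_0^1 q_\tau=1$.
\end{enumerate}

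The case $\tau=0$ is immediate: $q_0\equiv 1$ is a constant, hence nondecreasing, density. It also coincides with $\nu_0$, so for this case the inequality holds even without appealing to the theorem.

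Two remarks complete the picture. First, the functional $\mathcal U_x$ used in Theorem~\ref{thm:best_monotone_is_truncation} is applied to normalized densities. This matches the definition of $\mathcal U_x(g)$ in Lemma~\ref{lem:oracle_rank_reduction}, which is invariant under positive rescaling of $g$. So it does not matter whether QRPO is written as $e^{w/\beta}$ or in the normalized form $q_\tau$ with $\tau=1/\beta$. Second, integrability poses no issue: $q_\tau$ is bounded and $m_x\in[0,1]$, so every integral involved is finite.

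There is essentially no obstacle here. The only points needing care are confirming that the normalization convention matches the class $\mathcal F_\uparrow$ and handling the degenerate endpoint $\tau=0$ separately.
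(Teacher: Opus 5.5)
Your proposal is correct and follows the paper's proof: the paper observes that each $q_\tau$ is nondecreasing, so $\{q_\tau:\tau\ge 0\}\subseteq\mathcal F_\uparrow$, and then applies Theorem~\ref{thm:best_monotone_is_truncation}. Your explicit checks of positivity, normalization, and the $\tau=0$ endpoint (where $q_0=\nu_0$) fill in details the paper leaves implicit.
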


\begin{proof}
Each $q_\tau$ is nondecreasing on $[0,1]$, so $\{q_\tau : \tau \ge 0\} \subseteq \mathcal F_\uparrow$. The claim follows immediately from Theorem~\ref{thm:best_monotone_is_truncation}.
\end{proof}

Thus, within the same oracle rank-space model, the best hard truncation rule weakly dominates the best QRPO-style full-support monotone tilt.

\subsubsection{Within-tail reweighting after truncation}
\label{app:general_within_tail_upweighting}

We now hold the truncation level fixed and show that, when oracle value remains positively aligned with proxy rank inside the retained tail, uniform weighting over the retained interval is locally suboptimal, so a finite within-tail tilt can improve over pure truncation at the same threshold.

For a fixed threshold $\lambda \in (0,1)$ and parameter $\beta \in (0,\infty)$, define the truncated-soft family
\begin{equation*}
f_{\lambda,\beta}(w)
:=
\frac{1}{Z_\lambda(\beta)}
\left(
\frac{w-\lambda}{1-w+\lambda}
\right)^{1/\beta}
\mathds{1}\{w \in (\lambda,1)\},
\end{equation*}
where
\begin{equation*}
Z_\lambda(\beta)
:=
\int_\lambda^1
\left(
\frac{w-\lambda}{1-w+\lambda}
\right)^{1/\beta} dw.
\end{equation*}
This is the rank-space form induced by the shifted-truncated odds-ratio tilt at threshold $\lambda$. We use the convention $f_{\lambda,\infty} = \nu_\lambda$, so $\beta = \infty$ recovers hard truncation, while finite $\beta$ tilts mass toward larger retained proxy ranks without changing the support. Define
\begin{equation*}
\mathcal U_x(\lambda,\beta)
:=
\int_\lambda^1 m_x(w)f_{\lambda,\beta}(w)\,dw,
\qquad
\mathcal U_x(\lambda,\infty)
:=
\mathcal U_x(\nu_\lambda).
\end{equation*}

\begin{proposition}[Local criterion for improving over pure truncation]
\label{prop:local_upweighting}
Assume the law of $r(x,Y)$ under $Y \sim \pi_{\mathrm{ref}}(\cdot \mid x)$ is continuous, and fix $\lambda \in (0,1)$. Let
\begin{equation*}
h_\lambda(w)
:=
\log\!\left(\frac{w-\lambda}{1-w+\lambda}\right),
\qquad
w \in (\lambda,1).
\end{equation*}
Then
\begin{equation*}
\left.\frac{\partial}{\partial (\beta^{-1})}\mathcal U_x(\lambda,\beta)\right|_{\beta=\infty}
=
\operatorname{Cov}_{W_r \sim \mathrm{Unif}[\lambda,1]}
\!\bigl(m_x(W_r),\,h_\lambda(W_r)\bigr).
\end{equation*}
In particular, if this covariance is positive, then there exists a finite $\beta$ such that
\begin{equation*}
\mathcal U_x(\lambda,\beta) > \mathcal U_x(\lambda,\infty)=\mathcal U_x(\nu_\lambda).
\end{equation*}
\end{proposition}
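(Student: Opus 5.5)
Set $s:=\beta^{-1}\in[0,\infty)$, so that $s=0$ is hard truncation. By Lemma~\ref{lem:oracle_rank_reduction}, continuity lets us write
\[
\mathcal U_x(\lambda,\beta)=\Phi(s):=\frac{N(s)}{D(s)},\qquad N(s)=\int_\lambda^1 m_x(w)e^{s h_\lambda(w)}\,dw,\quad D(s)=\int_\lambda^1 e^{s h_\lambda(w)}\,dw.
\]
This is an exponential-tilt family. The uniform density on $[\lambda,1]$ is tilted by $e^{s h_\lambda}$, and $D(s)=Z_\lambda(\beta)$. At $s=0$ we get $N(0)=\int_\lambda^1 m_x\,dw$ and $D(0)=1-\lambda$, so $\Phi(0)=\mathcal U_x(\nu_\lambda)$. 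Differentiating formally gives $N'(0)=\int m_x h_\lambda$ and $D'(0)=\int h_\lambda$. The quotient rule then yields
\[
\Phi'(0)=\frac{N'(0)}{D(0)}-\frac{N(0)D'(0)}{D(0)^2}=\mathbb E[m_xh_\lambda]-\mathbb E[m_x]\,\mathbb E[h_\lambda],
\]
where the expectations are under $\mathrm{Unif}[\lambda,1]$. This is exactly the claimed covariance. The derivative here is one-sided, at $s=0^+$.

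The main step is justifying differentiation under the integral at $s=0$. The function $h_\lambda$ is unbounded: it tends to $-\infty$ as $w\downarrow\lambda$, behaving like $\log(w-\lambda)$, and it is bounded above on $(\lambda,1]$ since $w-\lambda\le 1-\lambda<1$. For $s\in[0,s_0]$, the mean value theorem gives
\[
\bigl|e^{sh_\lambda}-1\bigr|/s\le |h_\lambda|\,\max\bigl(1,e^{s_0\sup h_\lambda}\bigr).
\]
The constant factor is finite, and $|h_\lambda|$ is integrable on $(\lambda,1)$ because logarithmic singularities are integrable. We also have $0\le m_x\le 1$, since $W_u\in[0,1]$. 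Dominated convergence therefore gives the right derivatives of both $N$ and $D$ at $0$. Moreover $D(0)>0$, so the quotient rule is valid.

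For the ``in particular'' claim, suppose $\Phi'(0^+)>0$. Then $\Phi(s)>\Phi(0)$ for all sufficiently small $s>0$. Picking any such $s$ and setting $\beta=1/s\in(0,\infty)$ gives a finite $\beta$ with $\mathcal U_x(\lambda,\beta)>\mathcal U_x(\nu_\lambda)$.

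One caveat concerns the form of the normalizer. The finiteness of $Z_\lambda(\beta)$ for small $s$ uses the form $(w-\lambda)/(1-w+\lambda)$, whose denominator is at least $\lambda>0$ on the tail, so no integrability issue arises near $w=1$. I expect the dominated-convergence justification to be the only real obstacle; the rest is the standard identity that the derivative of a tilted mean equals a covariance.
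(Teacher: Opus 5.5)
Your proof is correct and follows the paper's argument. You write $\mathcal U_x(\lambda,\beta)$ as a ratio of $\mathrm{Unif}[\lambda,1]$ expectations tilted by $e^{h_\lambda/\beta}$, differentiate the quotient in $\beta^{-1}$ at $0$ to obtain the covariance, and conclude from the positive one-sided derivative; your mean-value-theorem domination bound (using $h_\lambda\le\log((1-\lambda)/\lambda)$ and the integrable logarithmic singularity at $w=\lambda$) simply makes explicit the paper's one-line justification for differentiating under the integral.
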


\begin{proof}
Let $f_{\lambda,\infty} = \nu_\lambda$ denote the uniform density on $[\lambda,1]$. Then
\begin{equation*}
f_{\lambda,\beta}(w)
=
\frac{e^{h_\lambda(w)/\beta} f_{\lambda,\infty}(w)}
{\mathbb E_{W_r \sim \mathrm{Unif}[\lambda,1]}[e^{h_\lambda(W_r)/\beta}]}.
\end{equation*}
Therefore,
\begin{equation*}
\mathcal U_x(\lambda,\beta)
=
\frac{
\mathbb E_{W_r \sim \mathrm{Unif}[\lambda,1]}
\!\left[m_x(W_r)e^{h_\lambda(W_r)/\beta}\right]
}{
\mathbb E_{W_r \sim \mathrm{Unif}[\lambda,1]}
\!\left[e^{h_\lambda(W_r)/\beta}\right]
}.
\end{equation*}
Differentiating with respect to $\beta^{-1}$ at $\beta=\infty$ gives 
\begin{equation*}
\left.\frac{\partial}{\partial (\beta^{-1})}\mathcal U_x(\lambda,\beta)\right|_{\beta=\infty}
=
\mathbb E[m_x(W_r)h_\lambda(W_r)]
-
\mathbb E[m_x(W_r)]\,\mathbb E[h_\lambda(W_r)],
\end{equation*}
where all expectations are under $W_r \sim \mathrm{Unif}[\lambda,1]$. The differentiation may be passed under the expectation because $h_\lambda$ is integrable on $(\lambda,1)$ under the uniform law. This is exactly
\begin{equation*}
\operatorname{Cov}_{W_r \sim \mathrm{Unif}[\lambda,1]}
\!\bigl(m_x(W_r),\,h_\lambda(W_r)\bigr).
\end{equation*}
If this covariance is positive, then the derivative at $\beta=\infty$ in the $\beta^{-1}$ direction is positive, so by continuity there exists a finite $\beta$ such that
\begin{equation*}
\mathcal U_x(\lambda,\beta) > \mathcal U_x(\lambda,\infty).
\end{equation*}
\end{proof}

This is a fixed-$\lambda$ refinement statement, not a global dominance claim. It does not say that finite-$\beta$ reweighting beats the best threshold when $\lambda$ is optimized freely. Instead, once a global quality floor has been chosen, the covariance criterion identifies when a softer within-tail tilt extracts additional oracle value from the retained tail.

\subsubsection{Quadratic illustration underlying Figure~\ref{fig:theory}}
\label{app:quadratic_worked_example}

This subsection gives a minimal analytic example that separates the two effects shown in Figure~\ref{fig:theory}: changing the retained support and reweighting points within a fixed retained tail. Consider the stylized oracle-utility shape
\begin{equation*}
m_s(w) := 1-(w-s)^2,
\qquad
s\in(0,1),
\end{equation*}
where $s$ is the proxy-rank location at which oracle value is maximized. We use $m_s$ only as an unnormalized analytic shape. It is not itself a realizable conditional oracle win-rate profile, since
\begin{equation*}
\int_0^1 m_s(w)\,dw=\frac{2}{3}+s-s^2,
\end{equation*}
which is generally not $1/2$. A normalized profile with the same shape is
\begin{equation*}
\bar m_s(w)
:=
\frac12+\frac12\left(m_s(w)-\left(\frac23+s-s^2\right)\right).
\end{equation*}
Then $\int_0^1 \bar m_s(w)\,dw=1/2$, and $\bar m_s(w)\in[1/6,2/3]$ for all $s,w\in[0,1]$. For any density $f$ on $[0,1]$,
\begin{equation*}
\bar{\mathcal U}_s(f)
:=
\int_0^1 \bar m_s(w)f(w)\,dw
=
\frac12+\frac12\left(\mathcal U_s(f)-\left(\frac23+s-s^2\right)\right),
\end{equation*}
where
\begin{equation*}
\mathcal U_s(f):=\int_0^1 m_s(w)f(w)\,dw.
\end{equation*}
Thus, for each fixed $s$, $\bar{\mathcal U}_s$ is a positive affine transformation of $\mathcal U_s$. All maximizers, strict comparisons, crossing locations, and the threshold $s_\beta$ below are unchanged by this normalization. We therefore use the shorter unnormalized form $m_s$ in the algebra.

For the QRPO-style retained-tail family, write
\begin{equation*}
\mathcal U_s(\lambda,\beta)
:=
\int_\lambda^1 m_s(w)f_{\lambda,\beta}(w)\,dw,
\qquad
\mathcal U_s(\lambda,\infty)
:=
\mathcal U_s(\nu_\lambda).
\end{equation*}
The first proposition isolates the value of adaptive support restriction by comparing the best hard truncation rule with the best full-support QRPO tilt. The second proposition fixes $\lambda=1/2$, matching Figure~\ref{fig:theory}, and asks when finite within-tail reweighting improves over pure truncation at that same threshold.

\begin{proposition}[Adaptive truncation beats best QRPO under the quadratic profile]
\label{prop:quadratic_example}
For every $s>1/2$, the hard truncation threshold
\begin{equation*}
\lambda^\star(s):=\frac{3s-1}{2}
\end{equation*}
satisfies
\begin{equation*}
\mathcal U_s(\nu_{\lambda^\star(s)})
=
1-\frac{(1-s)^2}{4}
>
\sup_{\tau\ge 0}\mathcal U_s(q_\tau).
\end{equation*}
\end{proposition}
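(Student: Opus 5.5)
The plan is to compute the truncation value in closed form, show that $\lambda^\star(s)$ is the \emph{unique} maximizer of $\lambda\mapsto\mathcal U_s(\nu_\lambda)$ on $[0,1)$, and then upgrade the weak inequality of Corollary~\ref{cor:qrpo_rank_space} to a strict one, using the mixture representation of Lemma~\ref{lem:mixture_monotone}.

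\textbf{Step 1 (closed form).} Since $\nu_\lambda$ is uniform on $[\lambda,1]$,
\begin{equation*}
\mathcal U_s(\nu_\lambda)=1-\frac{1}{1-\lambda}\int_\lambda^1 (w-s)^2\,dw=1-\varphi(\lambda),
\qquad
\varphi(\lambda):=\frac{(1-s)^3+(s-\lambda)^3}{3(1-\lambda)}.
\end{equation*}
Write $a=1-s$ and $b=s-\lambda$, so that $1-\lambda=a+b$. At $\lambda=\lambda^\star(s)$ we have $b=a/2$. Then $\varphi=\tfrac{9}{8}a^3/(\tfrac92 a)=a^2/4$, which gives the claimed value $1-(1-s)^2/4$.

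\textbf{Step 2 (unique maximizer).} Differentiating, the numerator of $3\varphi'(\lambda)$ is $-3b^2(a+b)+a^3+b^3$. This factors as $(a+b)^2(a-2b)$, which one checks by expanding. Hence $\varphi'<0$ when $b>a/2$, i.e.\ when $\lambda<\lambda^\star$, and $\varphi'>0$ when $\lambda>\lambda^\star$. For $s>1/2$ we have $\lambda^\star(s)\in(1/4,1)$, so it lies in the interior of $[0,1)$. Therefore $\mathcal U_s(\nu_\lambda)<\mathcal U_s(\nu_{\lambda^\star})$ for every $\lambda\neq\lambda^\star$.

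\textbf{Step 3 (strictness for each fixed $\tau$).} Fix $\tau\ge0$. By Lemma~\ref{lem:mixture_monotone}, $q_\tau=\int\nu_\lambda\,\xi_\tau(d\lambda)$ with
\begin{equation*}
\xi_\tau=q_\tau(0)\delta_0+(1-t)q_\tau'(t)\,dt .
\end{equation*}
This measure has no atom at $\lambda^\star>0$: it is $\delta_0$ when $\tau=0$, and otherwise an atom at $0$ plus an absolutely continuous part. By the Fubini step of Theorem~\ref{thm:best_monotone_is_truncation},
\begin{equation*}
\mathcal U_s(q_\tau)=\int\mathcal U_s(\nu_\lambda)\,\xi_\tau(d\lambda).
\end{equation*}
The integrand is strictly below $\mathcal U_s(\nu_{\lambda^\star})$ except on the $\xi_\tau$-null set $\{\lambda^\star\}$. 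Hence $\mathcal U_s(q_\tau)<\mathcal U_s(\nu_{\lambda^\star})$.

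\textbf{Step 4 (strictness of the supremum, the main obstacle).} Pointwise strictness does not by itself give a strict inequality for the supremum over the non-compact set $\tau\in[0,\infty)$. I would handle this by compactifying.
\begin{itemize}
\item The map $\tau\mapsto\mathcal U_s(q_\tau)$ is continuous on $[0,\infty)$, including at $\tau=0$, where $q_\tau\to1$ uniformly.
\item As $\tau\to\infty$, $q_\tau$ concentrates at $w=1$, so $\mathcal U_s(q_\tau)\to m_s(1)=1-(1-s)^2$. This limit is strictly less than $1-(1-s)^2/4$ because $s<1$.
\end{itemize}
Extending the map continuously to $[0,\infty]$ gives a continuous function on a compact set. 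It therefore attains its maximum at some $\tau_0\in[0,\infty]$. By Step 3 and the limit computation, that maximum is strictly below $\mathcal U_s(\nu_{\lambda^\star(s)})$, which proves the proposition.
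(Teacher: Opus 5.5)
Your proposal is correct and follows essentially the same route as the paper: a closed form for $\mathcal U_s(\nu_\lambda)$ with unique interior maximizer $\lambda^\star(s)$, then strict domination of each individual $q_\tau$ via the mixture representation of Lemma~\ref{lem:mixture_monotone} (your ``no atom at $\lambda^\star$'' is the concrete form of the paper's ``$q_\tau\neq\nu_{\lambda^\star(s)}$''), and finally the $\tau\to\infty$ limit $m_s(1)=1-(1-s)^2$. The differences are minor: the paper gets the closed form from the mean--variance decomposition of $\nu_\lambda$ rather than your cubic factorization, and your compactification to $[0,\infty]$ in Step 4 spells out the continuity argument that the paper leaves implicit when it passes from strictness for each $\tau$ to strictness of the supremum.
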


\begin{proof}
Since $m_s(w)=1-(w-s)^2$, maximizing $\mathcal U_s(f)$ is equivalent to minimizing
\begin{equation*}
R(f;s):=\mathbb E_{W_r\sim f}\!\left[(W_r-s)^2\right].
\end{equation*}
For the top-interval uniform law $\nu_\lambda$, we have
\begin{equation*}
\mathbb E_{\nu_\lambda}[W_r]=\frac{1+\lambda}{2},
\qquad
\operatorname{Var}_{\nu_\lambda}(W_r)=\frac{(1-\lambda)^2}{12}.
\end{equation*}
Therefore
\begin{equation*}
R(\nu_\lambda;s)
=
\left(\frac{1+\lambda}{2}-s\right)^2+\frac{(1-\lambda)^2}{12}.
\end{equation*}
Differentiating with respect to $\lambda$ gives
\begin{equation*}
\frac{d}{d\lambda}R(\nu_\lambda;s)=\frac{2\lambda}{3}+\frac{1}{3}-s.
\end{equation*}
Thus the unique minimizer over $\lambda\in[0,1]$ is
\begin{equation*}
\lambda^\star(s)=\frac{3s-1}{2},
\end{equation*}
which lies in $(0,1)$ for $s\in(1/2,1)$. Substituting this value gives
\begin{equation*}
R(\nu_{\lambda^\star(s)};s)=\frac{(1-s)^2}{4},
\qquad
\mathcal U_s(\nu_{\lambda^\star(s)})=1-\frac{(1-s)^2}{4}.
\end{equation*}

By Lemma~\ref{lem:mixture_monotone}, every $f\in\mathcal F_\uparrow$ is a mixture of the top-interval uniform laws $\nu_\lambda$. Since $\lambda\mapsto\mathcal U_s(\nu_\lambda)$ has the unique maximizer $\lambda^\star(s)$, the unique maximizer over $\mathcal F_\uparrow$ is $\nu_{\lambda^\star(s)}$. Hence
\begin{equation*}
\sup_{\tau\ge0}\mathcal U_s(q_\tau)
\le
\mathcal U_s(\nu_{\lambda^\star(s)}).
\end{equation*}

It remains only to rule out equality. No finite-$\tau$ QRPO density equals $\nu_{\lambda^\star(s)}$: $q_0$ is uniform on $[0,1]$, while $q_\tau$ is strictly positive on all of $[0,1]$ for every $\tau>0$, whereas $\nu_{\lambda^\star(s)}$ has zero density below $\lambda^\star(s)>0$. Moreover, since $m_s$ is bounded and continuous and $q_\tau$ converges weakly to a point mass at $w=1$ as $\tau\to\infty$,
\begin{equation*}
\lim_{\tau\to\infty}\mathcal U_s(q_\tau)=m_s(1)=1-(1-s)^2.
\end{equation*}
This limiting value is strictly smaller than
\begin{equation*}
1-\frac{(1-s)^2}{4}
=
\mathcal U_s(\nu_{\lambda^\star(s)}).
\end{equation*}
Therefore neither a finite QRPO tilt nor its limiting point mass at $w=1$ attains the hard-truncation optimum, and the desired strict inequality follows.
\end{proof}

This comparison isolates the gain from support restriction itself. Even in the quadratic example, a full-support QRPO tilt cannot recover the oracle value achieved by the best hard truncation rule.

\begin{proposition}[At $\lambda=1/2$, finite $\beta$ helps above an explicit threshold]
\label{prop:fixed_half_threshold}
Define
\begin{equation*}
s_\beta
:=
\frac{\frac{11}{12}-\log 2}{1-\log 2}
\approx 0.7284.
\end{equation*}
Then
\begin{equation*}
\left.\frac{\partial}{\partial(\beta^{-1})}\mathcal U_s(1/2,\beta)\right|_{\beta=\infty}
=
(1-\log 2)s-\left(\frac{11}{12}-\log 2\right).
\end{equation*}
In particular, if $s>s_\beta$, then there exists a finite $\beta$ such that
\begin{equation*}
\mathcal U_s(1/2,\beta)>\mathcal U_s(1/2,\infty)=\mathcal U_s(\nu_{1/2}).
\end{equation*}
\end{proposition}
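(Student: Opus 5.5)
The plan is to specialize Proposition~\ref{prop:local_upweighting} to $\lambda=1/2$ and $m_x=m_s$, and then evaluate the covariance in closed form. Proposition~\ref{prop:local_upweighting} is stated for a conditional profile $m_x$, but its proof only uses that $m_x$ is bounded and measurable on $[\lambda,1]$. It therefore applies verbatim to the unnormalized shape $m_s(w)=1-(w-s)^2$. Alternatively, one can apply it to $\bar m_s$ and note that, by the discussion above, $\bar{\mathcal U}_s$ is a positive affine map of $\mathcal U_s$; this only rescales the derivative by $1/2$. The unnormalized form is the one whose derivative matches the stated formula, so I would work with $m_s$ directly. This gives
\begin{equation*}
\left.\frac{\partial}{\partial(\beta^{-1})}\mathcal U_s(1/2,\beta)\right|_{\beta=\infty}
=\operatorname{Cov}_{W\sim\mathrm{Unif}[1/2,1]}\bigl(m_s(W),h_{1/2}(W)\bigr).
\end{equation*}

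The first step is to expand $m_s(w)=(1-s^2)+2sw-w^2$. Constants drop out of a covariance, so
\begin{equation*}
\operatorname{Cov}(m_s,h)=2s\operatorname{Cov}(W,h)-\operatorname{Cov}(W^2,h).
\end{equation*}
I would then substitute $t=W-1/2\sim\mathrm{Unif}[0,1/2]$, which has density $2$, so that $h=\log(t/(1-t))$. All remaining integrals reduce to elementary antiderivatives of $t^k\log t$ and $t^k\log(1-t)$ on $[0,1/2]$. For the second kind, the substitution $u=1-t$ turns them into integrals of $\log u$ and $u\log u$ on $[1/2,1]$.

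Routine calculation gives the following values:
\begin{itemize}
\item $\mathbb E[h]=-2\log 2$;
\item $\mathbb E[t]=1/4$;
\item $\mathbb E[th]=\tfrac12-\log 2$.
\end{itemize}
Hence $\operatorname{Cov}(W,h)=\operatorname{Cov}(t,h)=\tfrac12(1-\log 2)$, and the linear-in-$s$ term is exactly $(1-\log 2)s$, as claimed.

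The remaining step, which I expect to be the main bookkeeping obstacle, is to show that $\operatorname{Cov}(W^2,h)=\tfrac{11}{12}-\log 2$. For this I would write $W^2=t^2+t+\tfrac14$, so that
\begin{equation*}
\operatorname{Cov}(W^2,h)=\operatorname{Cov}(t^2,h)+\operatorname{Cov}(t,h).
\end{equation*}
Only $\mathbb E[t^2h]$ is new. It needs $\int_0^{1/2}t^2\log t\,dt$ and $\int_{1/2}^1(1-u)^2\log u\,du$, both obtained by the same integration by parts as before. I would double-check the resulting constant numerically before finalizing.

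Once the derivative formula is established, it is positive exactly when $s>s_\beta=\bigl(\tfrac{11}{12}-\log 2\bigr)/(1-\log 2)$. The final clause of Proposition~\ref{prop:local_upweighting} then yields a finite $\beta$ with $\mathcal U_s(1/2,\beta)>\mathcal U_s(1/2,\infty)=\mathcal U_s(\nu_{1/2})$.
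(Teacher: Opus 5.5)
Your proposal is correct and follows essentially the paper's route: specialize Proposition~\ref{prop:local_upweighting} at $\lambda=1/2$ and evaluate the covariance by elementary integration. The only difference is bookkeeping: you split $m_s$ into $\operatorname{Cov}(W,h)$ and $\operatorname{Cov}(W^2,h)$, whereas the paper computes $\mathbb E[(W_r-s)^2h(W_r)]$ and subtracts the product of moments. The constant you left to verify does come out right: $\mathbb E[t^2h]=\tfrac{5}{12}-\tfrac{2}{3}\log 2$ and $\mathbb E[t^2]=\tfrac{1}{12}$, so $\operatorname{Cov}(t^2,h)=\tfrac{5}{12}-\tfrac12\log 2$ and hence $\operatorname{Cov}(W^2,h)=\tfrac{11}{12}-\log 2$.
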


\begin{proof}
We instantiate Proposition~\ref{prop:local_upweighting} at the fixed threshold $\lambda=1/2$. On the retained interval $[1/2,1]$, the local log-odds tilt is
\begin{equation*}
h(w)
:=
\log\!\left(\frac{w-\frac12}{\frac32-w}\right),
\qquad
w\in(1/2,1).
\end{equation*}
Proposition~\ref{prop:local_upweighting} gives
\begin{equation*}
\left.\frac{\partial}{\partial(\beta^{-1})}\mathcal U_s(1/2,\beta)\right|_{\beta=\infty}
=
\operatorname{Cov}_{W_r\sim\operatorname{Unif}[1/2,1]}
\!\bigl(m_s(W_r),h(W_r)\bigr).
\end{equation*}
Since $m_s(W_r)=1-(W_r-s)^2$, this is
\begin{equation*}
\left.\frac{\partial}{\partial(\beta^{-1})}\mathcal U_s(1/2,\beta)\right|_{\beta=\infty}
=
-\operatorname{Cov}_{W_r\sim\operatorname{Unif}[1/2,1]}
\!\bigl((W_r-s)^2,h(W_r)\bigr).
\end{equation*}

The required covariance is explicit. Under $W_r\sim\operatorname{Unif}[1/2,1]$,
\begin{equation*}
\mathbb E[h(W_r)]
=
2\int_{1/2}^1
\log\!\left(\frac{w-\frac12}{\frac32-w}\right)dw
=
-\log 4,
\end{equation*}
and
\begin{equation*}
\mathbb E[(W_r-s)^2]
=
\left(s-\frac34\right)^2+\frac{1}{48}
=
s^2-\frac32s+\frac{7}{12}.
\end{equation*}
A direct integration also gives
\begin{equation*}
\mathbb E[(W_r-s)^2h(W_r)]
=
\frac{11}{12}-s+\left(-2s^2+4s-\frac{13}{6}\right)\log 2.
\end{equation*}
Subtracting the product of the first two moments yields
\begin{equation*}
\operatorname{Cov}\!\bigl((W_r-s)^2,h(W_r)\bigr)
=
\frac{11}{12}-\log 2-(1-\log 2)s.
\end{equation*}
Therefore
\begin{equation*}
\left.\frac{\partial}{\partial(\beta^{-1})}\mathcal U_s(1/2,\beta)\right|_{\beta=\infty}
=
(1-\log 2)s-\left(\frac{11}{12}-\log 2\right).
\end{equation*}
The threshold $s_\beta$ is the unique zero of this affine function. Hence, for every $s>s_\beta$, the derivative in the $\beta^{-1}$ direction is positive at $\beta=\infty$, so for sufficiently small positive $\beta^{-1}$, equivalently for sufficiently large finite $\beta$, we have
\begin{equation*}
\mathcal U_s(1/2,\beta)>\mathcal U_s(1/2,\infty).
\end{equation*}
\end{proof}

\begin{figure}[tp]
     \centering
     % First Subfigure
     \begin{subfigure}[t]{0.50\textwidth}
         \centering
         \includegraphics[width=\textwidth]{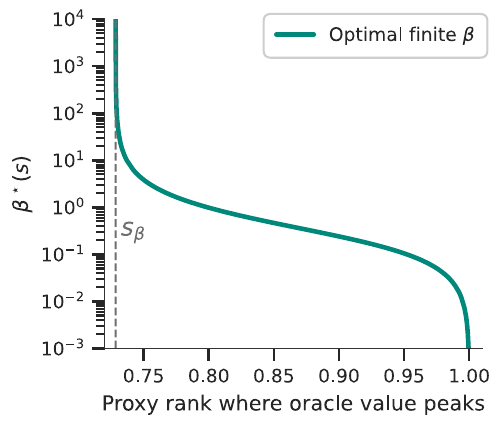}
     \end{subfigure}
    \caption{Optimal finite sharpness in the fixed-threshold quadratic illustration. For $\lambda=1/2$ and $m_s(w)=1-(w-s)^2$, we numerically maximize $\mathcal U_s(1/2,\beta)$ over finite $\beta$ for each oracle peak location $s>s_\beta$. The dashed vertical line marks $s_\beta\approx0.7284$, below which pure truncation is locally optimal and the optimizer is $\beta=\infty$. Just above $s_\beta$, the optimal finite tilt is arbitrarily weak, so $\beta^\star(s)$ is large; as $s$ approaches one, the optimal policy concentrates more strongly near the top proxy ranks, and $\beta^\star(s)$ decreases.}

    \label{fig:optimal_beta_quadratic}
\end{figure}

The proposition identifies the point at which pure truncation becomes locally suboptimal, but it does not assign a single universal value of $\beta$. The optimal value depends on the oracle peak location $s$. To compute it, write $\alpha=1/\beta$ and view $\mathcal U_s(1/2,\beta)$ as a one-dimensional function of $\alpha\ge 0$. Differentiating the tilted expectation gives
\begin{equation*}
\frac{\partial}{\partial \alpha}\mathcal U_s(1/2,\beta)
=
\operatorname{Cov}_{f_{1/2,\beta}}
\left(
m_s(W),
\log\left(\frac{W-\frac12}{\frac32-W}\right)
\right).
\end{equation*}
Thus an interior optimum $\beta^\star(s)$ is obtained by solving this scalar covariance equation, while for $s\le s_\beta$ the optimum remains the boundary value $\beta=\infty$, corresponding to pure truncation. Figure~\ref{fig:optimal_beta_quadratic} plots the resulting finite optimizer for $s>s_\beta$: $\beta^\star(s)$ diverges as $s$ approaches $s_\beta$ from above, because the beneficial tilt is infinitesimal at the threshold, and decreases as $s$ approaches one, where stronger concentration near the top proxy ranks becomes optimal. When the proxy is highly reliable $s\to 1$ and thus $\beta^\star\to0$.

\section{Additional experiments and results}
\label{appx:additional_exps}

% \subsection{Mistral-7B experiments}
% \label{appx:mistral}
% In this scenario, we ran a supervised-fine-tuning on the base model (Mistral-7B-Instruct-v0.2). 

\subsection{Raw rewards}
\label{appx:raw_rewards}
\begin{table}[thp]
\centering
\caption{Raw reward for Llama 8B Tülu 3 SFT on UltraFeedback eval split (UF) and AlpacaEval (AE), with reporting token length.}
\label{tab:llama_ultrafeedback_raw_reward}
\resizebox{\textwidth}{!}{%
\begin{tabular}{l cc cc cc c}
\toprule
& \multicolumn{2}{c}{ArmoRM} & \multicolumn{2}{c}{Skywork-v2-Llama} & \multicolumn{2}{c}{Skywork-v2-Qwen} & \multicolumn{1}{c}{UF} \\
\cmidrule(lr){2-3} \cmidrule(lr){4-5} \cmidrule(lr){6-7} \cmidrule(lr){8-8}
Method & UF & AE & UF & AE & UF & AE & Len \\
\midrule
Initial & $0.1300{\scriptstyle\pm 0.0008}$ & $0.1370{\scriptstyle\pm 0.0003}$ & $9.15{\scriptstyle\pm 0.18}$ & $10.78{\scriptstyle\pm 0.05}$ & $3.78{\scriptstyle\pm 0.15}$ & $4.97{\scriptstyle\pm 0.03}$ & $389.1{\scriptstyle\pm 8.0}$ \\
\midrule
DPO & $0.1497{\scriptstyle\pm 0.0002}$ & $0.1620{\scriptstyle\pm 0.0003}$ & $16.27{\scriptstyle\pm 0.10}$ & $21.36{\scriptstyle\pm 0.18}$ & $8.00{\scriptstyle\pm 0.07}$ & $10.87{\scriptstyle\pm 0.07}$ & $501.5{\scriptstyle\pm 2.3}$ \\
REBEL & $0.1489{\scriptstyle\pm 0.0002}$ & $0.1620{\scriptstyle\pm 0.0002}$ & $16.35{\scriptstyle\pm 0.07}$ & $22.15{\scriptstyle\pm 0.16}$ & $8.00{\scriptstyle\pm 0.06}$ & $11.20{\scriptstyle\pm 0.04}$ & $516.1{\scriptstyle\pm 3.6}$ \\
REBEL (rand) & $0.1436{\scriptstyle\pm 0.0002}$ & $0.1568{\scriptstyle\pm 0.0003}$ & $14.28{\scriptstyle\pm 0.10}$ & $20.05{\scriptstyle\pm 0.09}$ & $6.81{\scriptstyle\pm 0.02}$ & $10.18{\scriptstyle\pm 0.03}$ & $473.9{\scriptstyle\pm 5.2}$ \\
QRPO & $0.1517{\scriptstyle\pm 0.0007}$ & $0.1666{\scriptstyle\pm 0.0001}$ & $17.41{\scriptstyle\pm 0.07}$ & $21.08{\scriptstyle\pm 0.14}$ & $8.24{\scriptstyle\pm 0.01}$ & $10.31{\scriptstyle\pm 0.17}$ & $730.1{\scriptstyle\pm 4.8}$ \\
QRPO (rand) & $0.1465{\scriptstyle\pm 0.0001}$ & $0.1610{\scriptstyle\pm 0.0001}$ & $15.83{\scriptstyle\pm 0.09}$ & $21.87{\scriptstyle\pm 0.12}$ & $7.45{\scriptstyle\pm 0.06}$ & $10.91{\scriptstyle\pm 0.06}$ & $508.6{\scriptstyle\pm 2.8}$ \\
BoNBoN & $0.1498{\scriptstyle\pm 0.0005}$ & $0.1632{\scriptstyle\pm 0.0004}$ & $18.30{\scriptstyle\pm 0.06}$ & $24.29{\scriptstyle\pm 0.22}$ & $8.89{\scriptstyle\pm 0.02}$ & $12.25{\scriptstyle\pm 0.05}$ & $548.3{\scriptstyle\pm 3.0}$ \\
\midrule
\textbf{TUP (ours)} & $0.1531{\scriptstyle\pm 0.0001}$ & $0.1674{\scriptstyle\pm 0.0002}$ & $\mathbf{20.20{\scriptstyle\pm 0.07}}$ & $\mathbf{26.88{\scriptstyle\pm 0.09}}$ & $\mathbf{9.75{\scriptstyle\pm 0.02}}$ & $\mathbf{13.34{\scriptstyle\pm 0.02}}$ & $645.5{\scriptstyle\pm 2.7}$ \\
\bottomrule
\end{tabular}}
\end{table}
\begin{table}[t]
\centering
\caption{Raw reward for Llama 8B Tülu 3 SFT on Magpie Air eval split (MA) and AlpacaEval (AE), with reporting token length.}
\label{tab:llama_magpie_raw_reward}
\resizebox{\textwidth}{!}{%
\begin{tabular}{l cc cc cc c}
\toprule
& \multicolumn{2}{c}{ArmoRM} & \multicolumn{2}{c}{Skywork-v2-Llama} & \multicolumn{2}{c}{Skywork-v2-Qwen} & \multicolumn{1}{c}{MA} \\
\cmidrule(lr){2-3} \cmidrule(lr){4-5} \cmidrule(lr){6-7} \cmidrule(lr){8-8}
Method & MA & AE & MA & AE & MA & AE & Len \\
\midrule
Initial & $0.1530{\scriptstyle\pm 0.0010}$ & $0.1370{\scriptstyle\pm 0.0003}$ & $10.01{\scriptstyle\pm 0.36}$ & $10.78{\scriptstyle\pm 0.05}$ & $4.99{\scriptstyle\pm 0.29}$ & $4.97{\scriptstyle\pm 0.03}$ & $331.6{\scriptstyle\pm 2.4}$ \\
\midrule
DPO  & $0.1800{\scriptstyle\pm 0.0003}$ & $0.1581{\scriptstyle\pm 0.0006}$ & $19.23{\scriptstyle\pm 0.12}$ & $18.82{\scriptstyle\pm 0.19}$ & $11.93{\scriptstyle\pm 0.09}$ & $9.69{\scriptstyle\pm 0.08}$ & $484.5{\scriptstyle\pm 3.2}$ \\
REBEL & $0.1795{\scriptstyle\pm 0.0003}$ & $0.1591{\scriptstyle\pm 0.0007}$ & $20.69{\scriptstyle\pm 0.31}$ & $20.65{\scriptstyle\pm 0.19}$ & $12.31{\scriptstyle\pm 0.14}$ & $10.37{\scriptstyle\pm 0.12}$ & $543.2{\scriptstyle\pm 4.5}$ \\
REBEL (random) & $0.1778{\scriptstyle\pm 0.0002}$ & $0.1565{\scriptstyle\pm 0.0003}$ & $17.60{\scriptstyle\pm 0.20}$ & $17.81{\scriptstyle\pm 0.14}$ & $10.88{\scriptstyle\pm 0.14}$ & $9.14{\scriptstyle\pm 0.08}$ & $479.8{\scriptstyle\pm 4.4}$ \\
QRPO & $0.1772{\scriptstyle\pm 0.0003}$ & $0.1578{\scriptstyle\pm 0.0000}$ & $18.23{\scriptstyle\pm 0.23}$ & $18.99{\scriptstyle\pm 0.09}$ & $10.25{\scriptstyle\pm 0.13}$ & $9.30{\scriptstyle\pm 0.01}$ & $554.0{\scriptstyle\pm 2.2}$ \\
QRPO (random) & $0.1717{\scriptstyle\pm 0.0004}$ & $0.1549{\scriptstyle\pm 0.0007}$ & $16.46{\scriptstyle\pm 0.15}$ & $18.20{\scriptstyle\pm 0.26}$ & $9.28{\scriptstyle\pm 0.11}$ & $9.03{\scriptstyle\pm 0.15}$ & $473.2{\scriptstyle\pm 0.6}$ \\
BoNBoN & $0.1654{\scriptstyle\pm 0.0003}$ & $0.1453{\scriptstyle\pm 0.0008}$ & $14.80{\scriptstyle\pm 0.21}$ & $14.69{\scriptstyle\pm 0.12}$ & $8.30{\scriptstyle\pm 0.06}$ & $7.21{\scriptstyle\pm 0.13}$ & $416.2{\scriptstyle\pm 1.6}$ \\
\midrule
\textbf{TUP (ours)} & $0.1791{\scriptstyle\pm 0.0004}$ & $0.1583{\scriptstyle\pm 0.0006}$ & $22.32{\scriptstyle\pm 0.23}$ & $21.48{\scriptstyle\pm 0.25}$ & $13.12{\scriptstyle\pm 0.12}$ & $10.52{\scriptstyle\pm 0.03}$ & $700.9{\scriptstyle\pm 4.3}$ \\
\bottomrule
\end{tabular}}
\end{table}
\begin{table}[t]
\centering
\caption{Raw reward for Mistral 7B SFT on Magpie Air eval split (MA) and AlpacaEval (AE), with reporting token length.}
\label{tab:mistral_sft_magpie_raw_reward}
\resizebox{\textwidth}{!}{%
\begin{tabular}{l cc cc cc c}
\toprule
& \multicolumn{2}{c}{ArmoRM} & \multicolumn{2}{c}{Skywork-v2-Llama} & \multicolumn{2}{c}{Skywork-v2-Qwen} & \multicolumn{1}{c}{MA} \\
\cmidrule(lr){2-3} \cmidrule(lr){4-5} \cmidrule(lr){6-7} \cmidrule(lr){8-8}
Method & MA & AE & MA & AE & MA & AE & Len \\
\midrule
Initial & $0.1628{\scriptstyle\pm 0.0002}$ & $0.1478{\scriptstyle\pm 0.0002}$ & $15.99{\scriptstyle\pm 0.15}$ & $16.55{\scriptstyle\pm 0.20}$ & $9.86{\scriptstyle\pm 0.08}$ & $8.77{\scriptstyle\pm 0.09}$ & $493.1{\scriptstyle\pm 4.1}$ \\
\midrule
DPO  & $0.1743{\scriptstyle\pm 0.0003}$ & $0.1530{\scriptstyle\pm 0.0005}$ & $22.71{\scriptstyle\pm 0.28}$ & $20.64{\scriptstyle\pm 0.23}$ & $16.75{\scriptstyle\pm 0.19}$ & $11.84{\scriptstyle\pm 0.13}$ & $750.1{\scriptstyle\pm 2.7}$ \\
REBEL & $0.1593{\scriptstyle\pm 0.0004}$ & $0.1424{\scriptstyle\pm 0.0014}$ & $14.27{\scriptstyle\pm 0.20}$ & $11.68{\scriptstyle\pm 0.20}$ & $10.18{\scriptstyle\pm 0.10}$ & $6.50{\scriptstyle\pm 0.13}$ & $1375.7{\scriptstyle\pm 6.8}$ \\
REBEL (random) & $0.1717{\scriptstyle\pm 0.0001}$ & $0.1539{\scriptstyle\pm 0.0002}$ & $\mathbf{23.29{\scriptstyle\pm 0.05}}$ & $\mathbf{20.97{\scriptstyle\pm 0.04}}$ & $\mathbf{17.19{\scriptstyle\pm 0.10}}$ & $\mathbf{12.08{\scriptstyle\pm 0.01}}$ & $861.0{\scriptstyle\pm 6.5}$ \\
QRPO & $0.1710{\scriptstyle\pm 0.0006}$ & $0.1532{\scriptstyle\pm 0.0002}$ & $21.14{\scriptstyle\pm 0.13}$ & $19.80{\scriptstyle\pm 0.06}$ & $14.28{\scriptstyle\pm 0.03}$ & $10.86{\scriptstyle\pm 0.11}$ & $629.1{\scriptstyle\pm 2.2}$ \\
QRPO (random) & $0.1522{\scriptstyle\pm 0.0005}$ & $0.1301{\scriptstyle\pm 0.0005}$ & $18.39{\scriptstyle\pm 0.12}$ & $15.29{\scriptstyle\pm 0.13}$ & $12.93{\scriptstyle\pm 0.03}$ & $8.27{\scriptstyle\pm 0.11}$ & $1155.9{\scriptstyle\pm 6.2}$ \\
BoNBoN & $0.1655{\scriptstyle\pm 0.0003}$ & $0.1491{\scriptstyle\pm 0.0006}$ & $17.33{\scriptstyle\pm 0.26}$ & $17.42{\scriptstyle\pm 0.13}$ & $11.07{\scriptstyle\pm 0.14}$ & $9.32{\scriptstyle\pm 0.09}$ & $516.4{\scriptstyle\pm 1.7}$ \\
\midrule
\textbf{TUP mild} & $0.1757{\scriptstyle\pm 0.0004}$ & $0.1537{\scriptstyle\pm 0.0004}$ & $22.38{\scriptstyle\pm 0.03}$ & $19.94{\scriptstyle\pm 0.10}$ & $16.02{\scriptstyle\pm 0.01}$ & $11.09{\scriptstyle\pm 0.05}$ & $808.2{\scriptstyle\pm 10.1}$ \\
\textbf{TUP mid.} & $\mathbf{0.1767{\scriptstyle\pm 0.0003}}$ & $0.1562{\scriptstyle\pm 0.0002}$ & $22.21{\scriptstyle\pm 0.18}$ & $20.23{\scriptstyle\pm 0.21}$ & $16.45{\scriptstyle\pm 0.14}$ & $11.52{\scriptstyle\pm 0.07}$ & $731.2{\scriptstyle\pm 3.9}$ \\
\textbf{TUP aggressive} & $0.1749{\scriptstyle\pm 0.0005}$ & $\mathbf{0.1564{\scriptstyle\pm 0.0006}}$ & $20.47{\scriptstyle\pm 0.07}$ & $19.33{\scriptstyle\pm 0.16}$ & $15.31{\scriptstyle\pm 0.05}$ & $10.91{\scriptstyle\pm 0.07}$ & $744.4{\scriptstyle\pm 1.4}$ \\
\bottomrule
\end{tabular}}
\end{table}

Tables~\ref{tab:llama_ultrafeedback_raw_reward} and~\ref{tab:llama_magpie_raw_reward} report the corresponding raw reward scores and generated lengths. On UltraFeedback, TUP remains strongest under both independent Skywork reward models, including on AlpacaEval. On Magpie Air, TUP remains competitive with the strongest baselines and improves over QRPO and BoNBoN under the independent Skywork evaluators. These raw results should be read together with the LC tables because several methods, including TUP and QRPO, tend to produce longer responses. 

\begin{figure}[htbp]

     \centering
     % First Subfigure

     \begin{subfigure}[t]{0.32\textwidth}
         \centering
         \includegraphics[width=\textwidth]{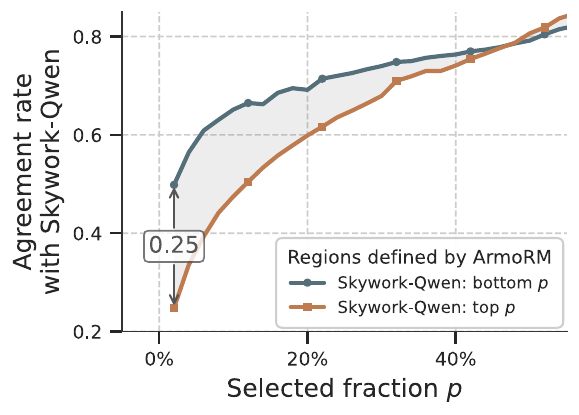}
     \end{subfigure}
     \hfill
     \begin{subfigure}[t]{0.32\textwidth}
         \centering
         \includegraphics[width=\textwidth]{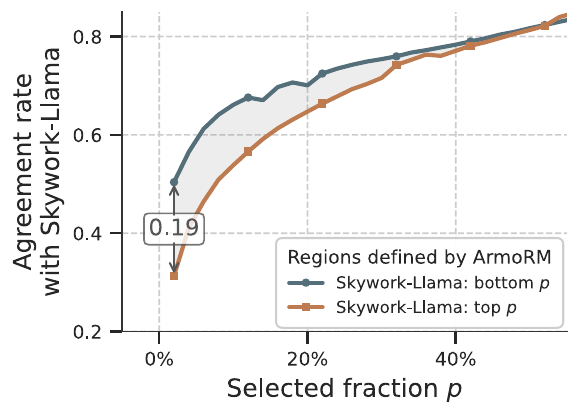}
     \end{subfigure}
     \hfill
     \begin{subfigure}[t]{0.32\textwidth}
         \centering
         \includegraphics[width=\textwidth]{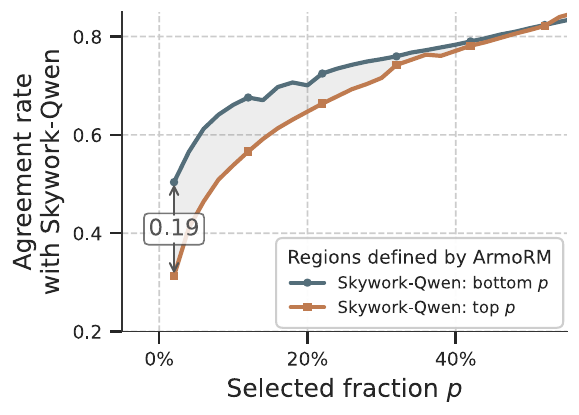}
     \end{subfigure}
     
     \caption{Additional cross-model agreement rates between ArmoRM and independent Skywork reward models. For each selected fraction $p$, the plots show the fraction of completions placed by ArmoRM in the top or bottom $p$ region that the Skywork evaluator also places in the corresponding region. \textbf{Left:} UltraFeedback with Skywork-v2-Qwen. \textbf{Center:} Magpie Air with Skywork-v2-Llama. \textbf{Right:} Magpie Air with Skywork-v2-Qwen.}
    \label{fig:top_bot_disagreement_additional}
\end{figure}

\begin{figure}[htbp]

     \centering
     % First Subfigure

     \begin{subfigure}[t]{0.32\textwidth}
         \centering
         \includegraphics[width=\textwidth]{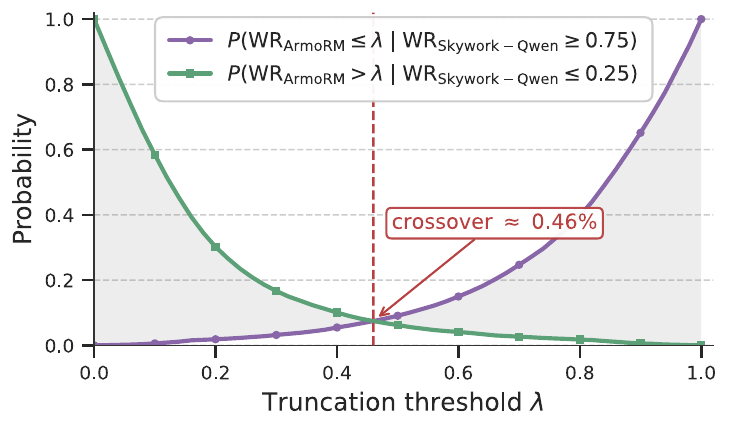}
     \end{subfigure}
     \hfill
     \begin{subfigure}[t]{0.32\textwidth}
         \centering
         \includegraphics[width=\textwidth]{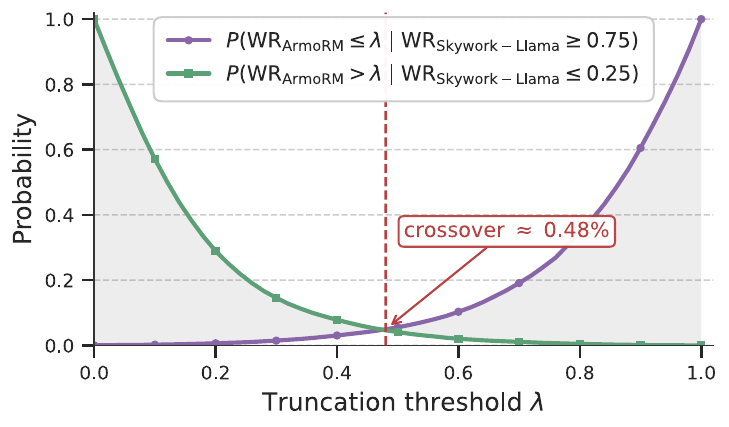}
     \end{subfigure}
     \hfill
     \begin{subfigure}[t]{0.32\textwidth}
         \centering
         \includegraphics[width=\textwidth]{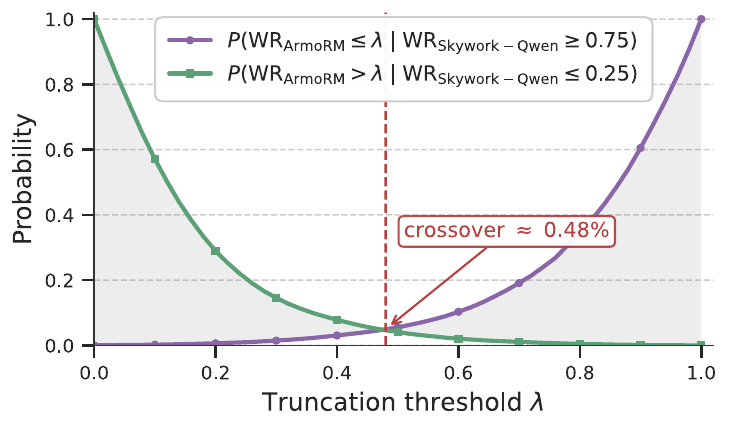}
     \end{subfigure}
     
     \caption{Additional cost--benefit curves for $\lambda$-truncation. Each plot compares the probability that truncation discards a completion ranked in the top $25\%$ by the independent Skywork evaluator with the probability that it retains a completion ranked in the bottom $25\%$ by that evaluator. \textbf{Left:} UltraFeedback with Skywork-v2-Qwen. \textbf{Center:} Magpie Air with Skywork-v2-Llama. \textbf{Right:} Magpie Air with Skywork-v2-Qwen.}

    \label{fig:tradeoff_additional}
\end{figure}

Figures~\ref{fig:top_bot_disagreement_additional} and~\ref{fig:tradeoff_additional} extend the motivating observation in Figure~\ref{fig:top_bot_disagreement} beyond the UltraFeedback/Skywork-v2-Llama setting. Across the additional dataset--evaluator combinations, ArmoRM-defined bottom regions are more consistently recovered by the independent Skywork evaluator than ArmoRM-defined top regions. The corresponding $\lambda$ curves show the same truncation tradeoff: increasing $\lambda$ reduces the chance of retaining completions that the independent evaluator ranks poorly, but also increases the chance of discarding completions it ranks highly. This supports using moderate truncation rather than an aggressive threshold.

We also estimate the effective sharpness learned by the trained TUP checkpoint by reversing the construction used during training. During training, the truncated win-rate label $w_\lambda$ determines the target policy-reference log-ratio through the configured sharpness $\beta$. After training, we instead observe the checkpoint log-ratio
\begin{equation*}
\Delta_\theta(y \mid x)
=
\log \pi_\theta(y \mid x)-\log \pi_{\mathrm{ref}}(y \mid x),
\end{equation*}
and ask which value of $\beta$ best explains its dependence on the reward-derived truncated win-rate. On the \texttt{test} split, we construct $w_\lambda$ by ranking each response reward against the 12 reference completions for the same prompt and applying the same threshold $\lambda=0.5$ used in training. We then fit the linear relation between $\Delta_\theta(y \mid x)$ and $z=\operatorname{logit}(w_\lambda)$ on the finite, unclipped subset. This gives an estimate of $\beta_{\mathrm{eff}}\approx 0.0147$. Thus, although training used the nominal value $\beta=0.01$, the learned policy behaves on this diagnostic as if its effective sharpness is roughly $1.5$ times larger.

\subsection{Length-matched rewards}
\label{appx:length-matched}
\begin{table}[thp]
\centering
\caption{TUP (with $\lambda=0.5$) win rates against each benchmark method on similar-length AlpacaEval responses for Llama 8B Tülu 3 SFT trained on Magpie Air. For each comparison, we retain response pairs whose lengths differ by at most $10\%$. Values above $50\%$ favor TUP and are shown in bold. The final column gives the number of retained pairs out of $2{,}415$.}
\label{tab:length-matched}
\resizebox{0.9\textwidth}{!}{%
\begin{tabular}{lcccc}
\toprule
Method & ArmoRM & Skywork-Llama & Skywork-Qwen & Prompts retained (of 2415) \\
\midrule
DPO & \textbf{53.1\%} & \textbf{55.3\%} & 49.1\% & 276 \\
REBEL & \textbf{50.9\%} & \textbf{50.2\%} & 45.8\% & 463 \\
REBEL (random) & \textbf{64.9\%} & \textbf{63.2\%} & \textbf{61.8\%} & 238 \\
QRPO & \textbf{51.4\%} & \textbf{56.6\%} & \textbf{56.6\%} & 415 \\
QRPO (random) & \textbf{54.2\%} & \textbf{53.1\%} & 46.7\% & 240 \\
BoNBoN & \textbf{71.0\%} & \textbf{60.7\%} & \textbf{56.5\%} & 169 \\
\midrule
Average & \textbf{57.6\%} & \textbf{56.5\%} & \textbf{52.7\%} & N/A \\
\bottomrule
\end{tabular}}
\end{table}

While the LC-reward results in Tables~\ref{tab:llama_indataset_lc_reward} and~\ref{tab:llama_alpacaeval} already account for response length at the metric level, we seek further evidence that verbosity is not the sole contributor to TUP's strong performance. To this end, Table~\ref{tab:length-matched} compares TUP with each baseline on responses of similar length. TUP is preferred over all benchmark methods under ArmoRM and Skywork-Llama. The results under Skywork-Qwen are more mixed across methods, although TUP's average win rate remains above $50\%$ under all three reward models. Importantly, the matching criterion retains only a fraction of the response pairs and produces a different subset for each baseline. These results should therefore be viewed as complementary evidence rather than as an estimate of performance on the full test set. Together with the LC-reward results, they further suggest that TUP's strong performance is not solely due to its longer responses.

\subsection{Global vs. per-prompt truncation}
\label{appx:global_vs_per_prompt}

To estimate the potential headroom of prompt-adaptive truncation over a fixed global threshold $\lambda$, we consider an idealized empirical setting in which Skywork-Llama serves as the ``oracle'' evaluator. For each UltraFeedback prompt $i$, let $(y_i^{(1)},y_i^{(2)},\ldots,y_i^{(N)})$ denote the responses ranked by ArmoRM. We then:

\begin{enumerate}
    \item Compute the ideal prompt-specific threshold
    \begin{equation*}
    \lambda_i^* =
    \frac{\argmax_j r_{\text{Skywork}}(y_i^{(j)})-1}{N},
    \end{equation*}
    which discards all responses ranked below the Skywork-best response by ArmoRM while retaining that response.
    \item Select the single global threshold $\lambda^*$ that maximizes the average Skywork-Llama win rate across all prompts.
\end{enumerate}

To isolate the effect of the truncation threshold, we apply no within-tail reweighting. Under this idealized setting, the prompt-specific thresholds achieve an empirical Skywork-based win rate of $0.9325$, compared with $0.8820$ for the best fixed global threshold. Thus, although prompt-specific truncation offers some additional headroom, the fixed global threshold captures most of the attainable win rate in this analysis.

\section{Implementation details}
\label{appx:imp}

\subsection{Codebase, models, and data}

The experiments in Tables~\ref{tab:llama_indataset_lc_reward}--\ref{tab:mistral_sft_magpie_raw_reward} use the released QRPO benchmark data rather than regenerating reference completions or reward annotations. These data contain prompt-specific reference-completion pools scored by ArmoRM.

All experiments are implemented using the official QRPO repository\footnote{\url{https://github.com/CLAIRE-Labo/quantile-reward-policy-optimization.git}}, together with its released preprocessed reference datasets\footnote{\url{https://huggingface.co/collections/skandermoalla/qrpo-reference-datasets}}. We use Llama 8B Tülu 3 SFT as the initial policy. On Mistral 7B we run a dedicated supervised fine-tuning (STF) on the original model (Mistral-7B-Instruct-v0.2), we closely followed QRPO code, using the prompt–chosen-response from Magpie-Air-DPO-100K-v0.1. After filtering sequences to at most 2,048 tokens, the training set contained 97,812 examples, with 1,992 held out for evaluation. We trained for one epoch in \texttt{bfloat16} using an effective batch size of 128 across four GPUs, a learning rate of \(5 \times 10^{-7}\), 10\% warm-up, and cosine decay, without LoRA or other parameter-efficient tuning methods. To reproduce, please refer to our or QRPO GitHub repository.

The training reward model in the released data is \texttt{RLHFlow/ArmoRM-Llama3-8B-v0.1}. For transfer evaluation, we rescore generated completions with \texttt{Skywork/Skywork-Reward-V2-Llama-3.1-8B} and \newline\texttt{Skywork/Skywork-Reward-V2-Qwen3-8B}. 

For AlpacaEval, we report the win rate and length-controlled win rate against the default \texttt{gpt4-turbo} reference outputs, which correspond to \texttt{gpt-4-1106-preview}; \texttt{gpt-4o} is used only as the judge due to the deprecation of \texttt{gpt-4-1106-preview}.

The UltraFeedback setting contains $61{,}024$ training prompts and a held-out set of $1{,}995$ prompts, split into $998$ checkpoint-selection prompts and $997$ reward-reporting prompts. The Magpie Air setting contains $97{,}812$ training prompts and a held-out set of $1{,}992$ prompts, split into $996$ checkpoint-selection prompts and $996$ reward-reporting prompts. AlpacaEval contains $805$ evaluation prompts. The QRPO benchmark reference pools contain $50$ reference completions per prompt.

For DPO, REBEL, and QRPO, we use the best-performing non-SFT, off-policy configurations reported in Tables 12 and 14 of~\citet{matrenok_quantile_2025}. We use the \texttt{offpolicy2best} and \texttt{offpolicy2random} variants as reported in Tables~\ref{tab:llama_ultrafeedback_hparams} and~\ref{tab:llama_magpie_hparams}. BoNBoN and TUP are not included in the original QRPO benchmark, so we perform a separate hyperparameter search for these methods.

\subsection{Reward models used for evaluation}
\label{appx:reward_model_leaderboard}

Table~\ref{tab:rewardbench} summarizes the external reward models used in our experiments. ArmoRM is used only as the offline proxy reward model for training labels, whereas the Skywork-v2 models and \texttt{gpt-4o} are used only for evaluation.  The RewardBench ranks are provided only as external context, since leaderboard positions can change as new models are added.

\begin{table}[t]
\centering
\caption{RewardBench standings (regardless of public availability) for the reward and judge models used in our experiments, retrieved in April 2026. Global Rank denotes the model's overall position on the RewardBench leaderboard, and Type Rank denotes its position within the corresponding model category. The asterisk marks \texttt{gpt-4o}; because AlpacaEval uses an API-hosted judge, the exact served model version may not match the RewardBench entry.}
\label{tab:rewardbench}
\vspace{0.8em}
\resizebox{0.95\textwidth}{!}{%
\begin{tabular}{l l l c c}
\toprule
Model & Role in this work & Model type & Global Rank & Type Rank \\
\midrule
ArmoRM-Llama3-8B-v0.1 & Training proxy reward model & Classifier & \#34 & \#26 \\
Skywork-Reward-V2-Llama-3.1-8B & Evaluation reward model & Classifier & \#1 & \#1 \\
Skywork-Reward-V2-Qwen3-8B & Evaluation reward model & Classifier & \#6 & \#3 \\
$\texttt{gpt-4o}^*$ & GPT judge & Generative & \#36 & \#10 \\
\bottomrule
\end{tabular}}
\end{table}

\subsection{TUP implementation}

TUP uses the same precomputed completion pools and ArmoRM scores as the QRPO benchmark. For each prompt, we convert the ArmoRM scores in the pool into empirical in-pool win-rates, apply the shifted truncation $\hat{w}_{\lambda,r}=\max(\hat{w}_r-\lambda,0)$, and train with the BCE objective in Algorithm~\ref{alg:bce_alignment}. Relative to the QRPO pipeline, the implementation only changes the scalar training label, the known intercept, and the loss used. In the reported experiments the intercept uses the population normalizer $\beta\log Z_{\lambda,\beta}$; Appendix~\ref{app:normalizer_incomplete_beta} also gives the exact finite-pool alternative $Z_{K,\lambda,\beta}$ for the empirical-rank convention used in Algorithm~\ref{alg:bce_alignment}.

\paragraph{Numerical evaluation of the population normalizer.}
Although Proposition~\ref{prop:optimal_policy_lambda} writes $Z_{\lambda,\beta}$ as an incomplete Beta function, the implementation evaluates the equivalent continuous integral directly with high-precision arithmetic. We do not use standard regularized incomplete-Beta routines such as \texttt{scipy.special.betainc}, since these APIs normalize by the complete Beta function and assume positive shape parameters, whereas here the second parameter is $1-1/\beta<0$ for the $\beta$ values used in our sweeps. We therefore compute
\begin{equation*}
Z_{\lambda,\beta}
=
\int_0^{1-\lambda}
u^{1/\beta}(1-u)^{-1/\beta}
\,du
\end{equation*}
using \texttt{mpmath} with \texttt{mp.workdps(256)}. This avoids the underflow that can occur in standard double-precision quadrature for small $\beta$ and makes the BCE intercept $\beta\log Z_{\lambda,\beta}$ reproducible.

\subsection{Hyperparameter search and model selection}

\begin{table}[t]
\centering
\caption{Training hyperparameters Llama 8B Tülu 3 SFT on UltraFeedback reported in Table~\ref{tab:llama_indataset_lc_reward}, Table~\ref{tab:llama_alpacaeval} and Table~\ref{tab:llama_ultrafeedback_raw_reward}.}
\label{tab:llama_ultrafeedback_hparams}
\resizebox{0.70\textwidth}{!}{%
\begin{tabular}{l l c c c c c}
\toprule
Method & Dataset variant & $\beta$ & $lr$ & Num ref & $\lambda$ & Checkpoint \\
\midrule
DPO & offpolicy2best & 0.01 & 3e-07 & 6 & - & 300 \\
REBEL & offpolicy2best & 1e-06 & 3e-07 & 6 & - & 200 \\
REBEL (rand.) & offpolicy2random & 1e-06 & 1e-06 & 6 & - & 100 \\
QRPO & offpolicy2best & 3e-04 & 3e-07 & 3 & - & 100 \\
QRPO (rand.) & offpolicy2random & 3e-04 & 3e-07 & 3 & - & 100 \\
BoNBoN & offpolicy2best & 1e-03 & 1e-06 & 6 & - & 476 \\
\midrule
TUP mild & offpolicy2random & 0.01 & 1e-07 & 6 & 0.2 & 400 \\
TUP mid. & offpolicy2random & 0.01 & 1e-07 & 6 & 0.5 & 400 \\
TUP aggressive & offpolicy2random & 0.01 & 1e-07 & 6 & 0.8 & 476 \\
\bottomrule
\end{tabular}}
\end{table}

\begin{table}[t]
\centering
\caption{Training hyperparameters Llama 8B Tülu 3 SFT on Magpie Air reported in Table~\ref{tab:llama_indataset_lc_reward}, Table~\ref{tab:llama_alpacaeval} and Table~\ref{tab:llama_magpie_raw_reward}.}
\label{tab:llama_magpie_hparams}
\resizebox{0.70\textwidth}{!}{%
\begin{tabular}{l l c c c c c}
\toprule
Method & Dataset variant & $\beta$ & $lr$ & Num ref & $\lambda$ & Checkpoint \\
\midrule
DPO & offpolicy2random & 0.01 & 1e-06 & 6 & - & 480 \\
REBEL & offpolicy2best & 1e-06 & 1e-07 & 6 & - & 640 \\
REBEL (rand.) & offpolicy2random & 1e-04 & 1e-06 & 6 & - & 320 \\
QRPO & offpolicy2best & 1e-03 & 1e-07 & 1 & - & 320 \\
QRPO (rand.) & offpolicy2random & 1e-03 & 1e-07 & 1 & - & 320 \\
BoNBoN & offpolicy2best & 1e-03 & 1e-06 & 6 & - & 764 \\
\midrule
TUP mild & offpolicy2random & 0.01 & 1e-07 & 6 & 0.2 & 320 \\
TUP mid. & offpolicy2random & 0.01 & 1e-07 & 6 & 0.5 & 320 \\
TUP aggressive & offpolicy2random & 0.01 & 1e-07 & 6 & 0.8 & 320 \\
\bottomrule
\end{tabular}}
\end{table}

\begin{table}[t]
\centering
\caption{Training hyperparameters Mistral 7B SFT on Magpie Air reported in Table~\ref{tab:mistral_sft_magpie_lc_reward} and Table~\ref{tab:mistral_sft_magpie_raw_reward}.}
\label{tab:mistral_sft_magpie_hparams}
\resizebox{0.7\textwidth}{!}{%
\begin{tabular}{l l c c c c c}
\toprule
Method & Dataset variant & $\beta$ & $lr$ & Num ref & $\lambda$ & Checkpoint \\
\midrule
DPO & offpolicy2random & 0.03 & 3e-07 & 6 & - & 480 \\
REBEL & offpolicy2best & 1e-04 & 1e-07 & 6 & - & 480 \\
REBEL (rand.) & offpolicy2random & 1e-04 & 3e-07 & 6 & - & 320 \\
QRPO & offpolicy2best & 3e-03 & 1e-07 & 1 & - & 764 \\
QRPO (rand.) & offpolicy2random & 3e-03 & 1e-06 & 1 & - & 764 \\
BoNBoN & offpolicy2best & 1e-03 & 1e-07 & 6 & - & 480 \\
\midrule
TUP mild & offpolicy2random & 0.01 & 1e-06 & 6 & 0.2 & 764 \\
TUP mid. & offpolicy2random & 0.01 & 1e-06 & 6 & 0.5 & 764 \\
TUP aggressive & offpolicy2random & 0.01 & 1e-06 & 6 & 0.8 & 764 \\
\bottomrule
\end{tabular}}
\end{table}

For BoNBoN, we use only the IPO-BoN component of the original objective. This gives the closest functional comparison in our setting, which is offline BoN-style distillation without an additional SFT loss. We tune only the learning rate:
\begin{equation*}
\text{lr}\in\{1\mathrm{e}{-6},3\mathrm{e}{-7},1\mathrm{e}{-7}\}.
\end{equation*}

For TUP using Llama model on UltraFeedback and Magpie Air, we tune the learning rate, preference sharpness, and truncation threshold over
\begin{equation*}
\text{lr}\in\{1\mathrm{e}{-6},3\mathrm{e}{-7},1\mathrm{e}{-7}\},
\quad
\beta\in\{3\mathrm{e}{-3}, 1\mathrm{e}{-2}, 3\mathrm{e}{-2}\},
\quad
\lambda\in\{0.2,0.5,0.8\}.
\end{equation*}

For Mistral 7B on Magpie Air, due to computational constraints, we ran the search on a narrower set of hyper parameters:
\begin{equation*}
\text{lr}\in\{1\mathrm{e}{-6},3\mathrm{e}{-7}\},
\quad
\beta\in\{1\mathrm{e}{-2}, 3\mathrm{e}{-2}\},
\quad
\lambda\in\{0.2,0.5,0.8\}.
\end{equation*}

Checkpoint selection uses only the validation split. We select checkpoints by length-controlled reward under ArmoRM. For both model families, UltraFeedback checkpoints are saved and evaluated every $100$ steps and Magpie Air checkpoints are saved and evaluated every $160$ steps. For UltraFeedback, we only consider checkpoints with mean generated length below $650$ tokens. For Magpie Air, we do not impose an analogous length threshold. Tables~\ref{tab:llama_ultrafeedback_hparams} and~\ref{tab:llama_magpie_hparams} report the selected hyperparameters and checkpoints for each method.

\subsection{Training settings}

Unless stated otherwise, training settings follow the QRPO benchmark configuration. Training uses \texttt{bfloat16} precision with Accelerate and DeepSpeed ZeRO-1. We use a cosine learning-rate schedule with warmup ratio $0.1$. Gradient clipping is effectively disabled in the Llama benchmark runs by setting \texttt{max\_grad\_norm} to $10^8$.

The per-device training batch size is $2$, the per-device evaluation batch size is $4$, and gradient accumulation is $16$. With $4$ GPUs, this gives an effective training batch size of $128$.

The maximum prompt length, maximum completion length, and maximum total sequence length are all $2048$. The released reference completions were generated with temperature $1.0$ and top-$p=1.0$. For online evaluation, we generate one completion per prompt using temperature $0.6$, top-$p=0.9$, and a maximum of $2048$ new tokens.

\subsection{Evaluation and uncertainty}
For reward-model evaluation, we follow the QRPO evaluation pipeline. For each generated completion, the evaluator computes both raw reward and length-controlled reward using the reference-completion pool for the same prompt. For UltraFeedback and Magpie Air, we use the released QRPO reference pools. For AlpacaEval, which is not part of the released QRPO reference data, we construct the reference pool by generating $16$ completions per prompt from the base model using vLLM. To compute length-controlled metrics under a given reward model, we score both the generated completions and the corresponding prompt-specific reference pools with that reward model. The length-controlled score normalizes reward and length relative to the prompt-specific reference pool, then removes the estimated linear effect of length. Prompts with degenerate reference-length variance are masked by the evaluation code.

For reward-model metrics, we report uncertainty as the sample standard deviation across three generation seeds. For AlpacaEval, we replace the recently \texttt{gpt-4-1106-preview} judge used in earlier benchmark configurations with \texttt{gpt-4o}. This change may make the AlpacaEval scores not directly comparable to results computed with the older judge.

\subsection{Deviation from the QRPO implementation}

The QRPO codebase constructs training examples using both chosen and rejected completions by default. For TUP, we instead use a single randomly selected response per prompt. This matches the scalar-label formulation of Algorithm~\ref{alg:bce_alignment}, where each completion receives one shifted-truncated win-rate label. It also means that TUP receives less per-prompt training signal than methods that use both responses.

\subsection{Compute resources}

The reported training runs were performed on a machine with $4$ H200 GPUs and $64$ CPU cores.

\subsection{Compliance and asset licenses}
\label{appx:asset_compliance}

Table~\ref{tab:asset_licenses} summarizes the external code, model, API, and dataset assets used in the experiments. License and terms information was retrieved from the official repository, model cards, dataset cards, and provider documentation on April 30, 2026. We use these assets only for offline training or evaluation as described in Appendix~\ref{appx:imp}, and we do not redistribute the underlying external model checkpoints, API-hosted judge, or source datasets as part of this submission.
\clearpage
\begin{table*}[thp]
\centering
\caption{External assets used in the experiments and their reported licenses or terms. For the QRPO reference data, the table lists the released preprocessed datasets actually loaded by our experiments; these are derived reference-completion pools and rewards rather than newly collected data in this work.}
\label{tab:asset_licenses}
\vspace{0.4em}
\footnotesize
\setlength{\tabcolsep}{4pt}
\renewcommand{\arraystretch}{1.16}

\begin{tabularx}{\textwidth}{@{}L{0.4\textwidth}Y L{0.26\textwidth}@{}}
\toprule
Asset & Role in this work & Reported license or terms \\
\midrule

\assetlink{https://github.com/CLAIRE-Labo/quantile-reward-policy-optimization}{QRPO reference codebase}
&
Experimental framework and baseline implementation.
&
MIT license.
\\

\rowrule

\href{https://huggingface.co/collections/skandermoalla/qrpo-reference-datasets}{QRPO UltraFeedback reference datasets}
\newline
{\scriptsize \texttt{offpolicy2best}, \texttt{offpolicy2random}}
&
Preprocessed UltraFeedback reference-completion pools and ArmoRM scores used for training and evaluation.
&
MIT license.
\\

\rowrule

\href{https://huggingface.co/collections/skandermoalla/qrpo-reference-datasets}{QRPO Magpie Air reference datasets}
\newline
{\scriptsize \texttt{offpolicy2best}, \texttt{offpolicy2random}}
&
Preprocessed Magpie Air reference-completion pools and ArmoRM scores used for training and evaluation.
&
MIT license for the QRPO preprocessed releases; the
\href{https://huggingface.co/datasets/Magpie-Align/Magpie-Air-DPO-100K-v0.1}{upstream Magpie Air dataset card}
does not declare a separate license field.
\\

\rowrule

\assetlink{https://huggingface.co/datasets/openbmb/UltraFeedback}{openbmb/UltraFeedback}
&
Source instruction-tuning dataset underlying the UltraFeedback QRPO setting.
&
MIT license.
\\

\rowrule

\assetlink{https://huggingface.co/datasets/tatsu-lab/alpaca_eval}{tatsu-lab/alpaca\_eval}
&
AlpacaEval prompt set used for GPT-judge evaluation.
&
CC-BY-NC-4.0 license.
\\

\rowrule

\assetlink{https://huggingface.co/spaces/allenai/reward-bench}{allenai/reward-bench}
&
RewardBench leaderboard context for the reward and judge models in Table~\ref{tab:rewardbench}.
&
ODC-BY-1.0 license.
\\

\rowrule

\assetlink{https://huggingface.co/allenai/Llama-3.1-Tulu-3-8B-SFT}{allenai/Llama-3.1-Tulu-3-8B-SFT}
&
Initial policy for the reported Llama benchmark runs.
&
Llama 3.1 Community License.
\\

\rowrule

\assetlink{https://huggingface.co/mistralai/Mistral-7B-Instruct-v0.2}{mistralai/Mistral-7B-Instruct-v0.2}
&
Initial policy for the reported Mistral benchmark runs (SFT on this model was ran separately).
&
Mistral Community License.
\\

\rowrule

\assetlink{https://huggingface.co/RLHFlow/ArmoRM-Llama3-8B-v0.1}{RLHFlow/ArmoRM-Llama3-8B-v0.1}
&
Offline proxy reward model used in the released QRPO labels and for checkpoint selection.
&
Llama 3 Community License.
\\

\rowrule

\assetlink{https://huggingface.co/Skywork/Skywork-Reward-V2-Llama-3.1-8B}{Skywork/Skywork-Reward-V2-Llama-3.1-8B}
&
Independent reward model used for transfer evaluation.
&
Llama 3.1 Community License.
\\

\rowrule

\assetlink{https://huggingface.co/Skywork/Skywork-Reward-V2-Qwen3-8B}{Skywork/Skywork-Reward-V2-Qwen3-8B}
&
Independent reward model used for transfer evaluation.
&
Apache-2.0 license.
\\

\rowrule

\href{https://developers.openai.com/api/docs/models/gpt-4o}{\texttt{gpt-4o}}
&
API-hosted judge for AlpacaEval.
&
Proprietary OpenAI API model used under
\href{https://openai.com/policies/service-terms/}{OpenAI service terms}; no weights are redistributed.
\\

\bottomrule
\end{tabularx}
\end{table*}
\clearpage
\newpage

\end{document}